\DeclareMathOperator*{\argmin}{argmin}
\newtheorem{theorem}{Theorem}
\newtheorem{assumption}{Assumption}
\newcommand{\E}{\mathbb{E}}
\newcommand{\KL}{\mathrm{KL}}
\newenvironment{proof}{\par\noindent\textbf{Proof.}\ }{\hfill$\square$\par}
\newtheorem{remark}{Remark}
\newtheorem{proposition}{Proposition}
\title{Fair Bayesian Data Selection via Generalized Discrepancy Measures}
\author{
    %Authors
    % All authors must be in the same font size and format.
    Yixuan Zhang\textsuperscript{\rm 1}\thanks{These authors contributed equally to this work.},
    Jiabin Luo\textsuperscript{\rm 2}\footnotemark[1],
    Zhenggang Wang\textsuperscript{\rm 1},
    Feng Zhou\textsuperscript{\rm 3,4}\thanks{Corresponding author.},
    Quyu Kong\textsuperscript{\rm 5}
}
\title{My Publication Title --- Single Author}
\author {
    Author Name
}
\title{My Publication Title --- Multiple Authors}
\author {
    % Authors
    First Author Name\textsuperscript{\rm 1,\rm 2},
    Second Author Name\textsuperscript{\rm 2},
    Third Author Name\textsuperscript{\rm 1}
}
\begin{document}

\maketitle

\begin{abstract}
Fairness concerns are increasingly critical as machine learning models are deployed in high-stakes applications. While existing fairness-aware methods typically intervene at the model level, they often suffer from high computational costs, limited scalability, and poor generalization. To address these challenges, we propose a Bayesian data selection framework that ensures fairness by aligning group-specific posterior distributions of model parameters and sample weights with a shared central distribution. Our framework supports flexible alignment via various distributional discrepancy measures, including Wasserstein distance, maximum mean discrepancy, and $f$-divergence, allowing geometry-aware control without imposing explicit fairness constraints. This data-centric approach mitigates group-specific biases in training data and improves fairness in downstream tasks, with theoretical guarantees. Experiments on benchmark datasets show that our method consistently outperforms existing data selection and model-based fairness methods in both fairness and accuracy.
\end{abstract}

% Uncomment the following to link to your code, datasets, an extended version or similar.
% You must keep this block between (not within) the abstract and the main body of the paper.
% \begin{links}
%     \link{Code}{https://aaai.org/example/code}
%     \link{Datasets}{https://aaai.org/example/datasets}
%     \link{Extended version}{https://aaai.org/example/extended-version}
% \end{links}

\section{Introduction}
\label{sec:intro}

Artificial intelligence is rapidly expanding into key areas such as clinical diagnosis~\citep{diagnosis}, text generation~\citep{text_generation}, and financial credit approval~\citep{credit_example}. While these advanced models are powerful, they often exhibit uneven performance across different groups, such as those defined by gender, race, or socioeconomic status, which leads to unfair decisions and raising concerns about fairness risks in real-world applications~\citep{bird2016exploring}. As a result, ensuring fairness and preventing AI from exacerbating social inequalities have become critical challenges for both researchers and industry. Fairness-aware machine learning has thus emerged as a key area of study to address these issues.

Currently, most fairness-aware machine learning strategies focus on modifying the model itself~\citep{Zafar2015LearningFC,2018_icml_reductions,baharlouei2024fferm,chen2024post}. However, these approaches often require building new models from scratch for each task, resulting in high computational costs and low efficiency, which makes them difficult to scale to large datasets and deep neural networks. Moreover, even trained fair models may still encounter unfairness issues during transfer due to generalization errors~\citep{dutt2024fairtune}.        

\begin{figure}
    \centering
    \includegraphics[width=0.95\linewidth]{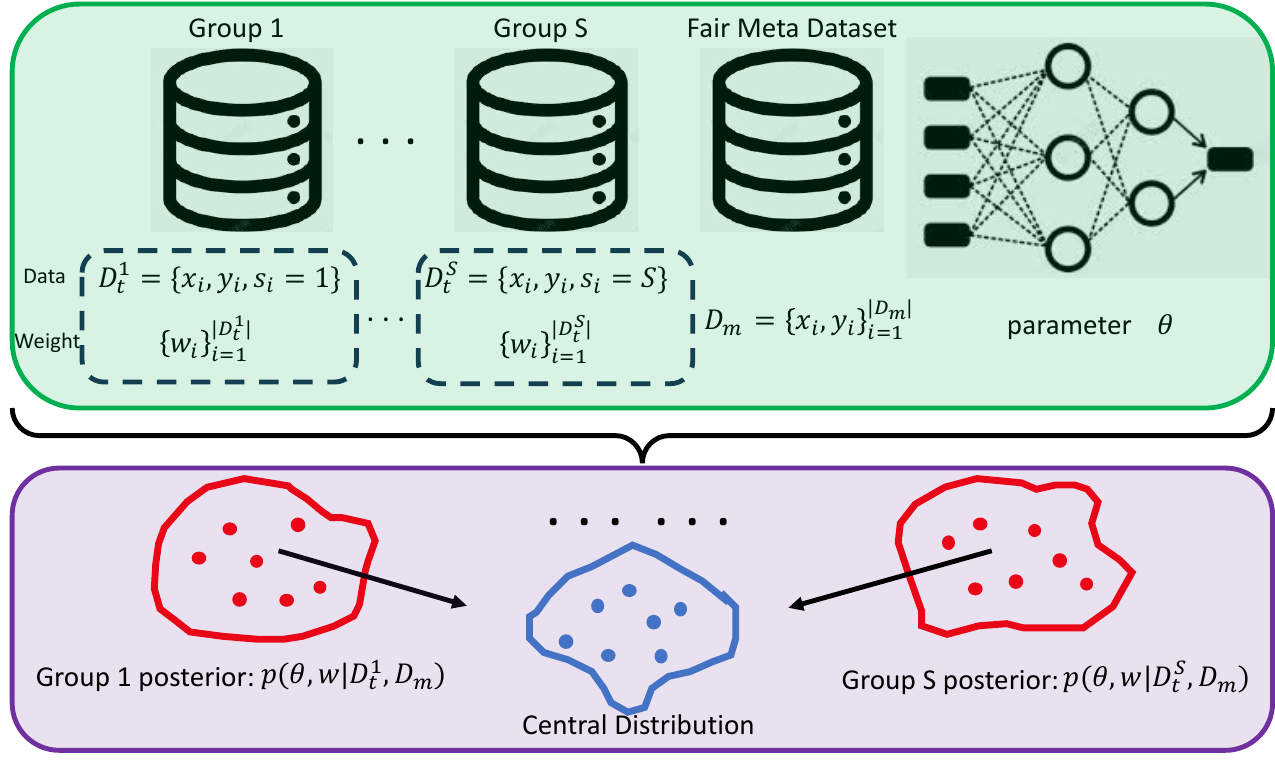}
    \caption{An illustration of Fair-BADS. Fair-BADS jointly infers model parameters and sample weights while reducing bias via posterior alignment to a central distribution.}
    \label{fig:scheme}
\end{figure}

Rather than addressing fairness solely during model training or transfer, a key challenge is to tackle the problem directly from the data. This is increasingly important as modern models rely heavily on massive raw data, which often contain imbalance and systemic biases. Meanwhile, manually identifying high-quality data from such large-scale sources is impractical. As a result, selecting high-quality and fairness-aware training data has become critical for building both effective and fair models~\citep{xu2024a}.

Data selection offers a practical solution for improving model fairness by identifying or reweighting training examples that encourages fair outcomes. However, most existing methods focus on maximizing data utility or informativeness, often overlooking fairness and inadvertently reinforcing biases against underrepresented groups. Moreover, many rely on bi-level optimization or meta-learning, which are computationally intensive and difficult to scale~\citep{fan2017learningdatalearn}. To address these limitations, Bayesian data selection~\citep{xu2024a} formulates the task as posterior inference over model parameters and sample weights, using stochastic gradient Langevin dynamics (SGLD) for efficient optimization. This avoids nested optimization, enables standard gradient-based updates, and therefore scales well to large models and datasets. As manually curating high-quality data becomes infeasible at scale, principled data selection has become essential for reducing noise, imbalance, and bias, making it a critical component of scalable, fairness-aware learning.

Building on this motivation, we propose a fairness-aware data selection framework. While existing approaches often attempt to enforce fairness by adjusting model parameters in Euclidean space, this overlooks the fact that model parameters naturally reside in a more complex, generally non-Euclidean space. To better reflect fairness in this setting, it is crucial to consider the intrinsic geometry of the parameter space and the distribution of group-specific posteriors within it. We propose a Bayesian framework that formulates fairness as the alignment of posterior distributions across demographic groups toward a shared central distribution under a general class of divergence based objectives. Specifically, we jointly infer model parameters and sample weights using a fairness-aware meta-dataset, encouraging group-specific posteriors to align toward this central distribution via divergences such as Wasserstein distance, maximum mean discrepancy (MMD), or $f$-divergence. To efficiently approximate high-dimensional posteriors, we adopt Stein variational gradient descent (SVGD), which deterministically updates particles while preserving diversity. This enables stable and scalable inference without injecting noise and prevents dominant-group bias from overwhelming the learned posterior, making the framework well-suited for fairness-aware learning in large-scale settings. 

Our contributions can be summarized as follows: (1) We propose a data-centric Bayesian framework for fairness-aware learning that jointly infers model parameters and instance weights, providing a scalable alternative to traditional model-centric approaches. (2) We propose a unified divergence-based formulation that aligns group-specific posteriors toward a shared central distribution, enabling flexible and geometry-aware fairness across demographic groups. (3) We provide theoretical guarantees by deriving discrepancy based bounds that approximate average group risk and bound intergroup performance gaps. (4) We use SVGD for efficient posterior inference, enabling stable updates without nested optimization and ensuring scalability for fairness-aware learning. 

% \begin{itemize}
%     \item We propose a data-centric Bayesian framework for fairness-aware learning that jointly infers model parameters and instance weights, providing a scalable alternative to traditional model-centric approaches.    
%     \item We propose a unified divergence-based formulation that aligns group-specific posteriors toward a shared central distribution, enabling flexible and geometry-aware fairness across demographic groups. 
%     \item We provide theoretical guarantees by deriving discrepancy based bounds that approximate average group risk and bound intergroup performance gaps. 
%     \item We use SVGD for efficient posterior inference, enabling stable updates without nested optimization and ensuring scalability for fairness-aware learning. 
% \end{itemize}

\section{Related Works}
\label{sec:related_work}

\paragraph{Data selection.} 
Most established selection strategies rely on bi-level optimization or meta-learning frameworks~\citep{grangier2023adaptivetrainingdistributionsscalable,pmlr-v80-ren18a,10.5555/3454287.3454459,zhang2021learningfastsamplereweighting}, which introduce an additional outer optimization loop to improve training data by maximizing model performance on a held-out meta set. These approaches, including sample-weighting~\citep{grangier2023adaptivetrainingdistributionsscalable,pmlr-v80-ren18a} and mini-batch reweighting~\citep{fan2017learningdatalearn}, often require expensive meta-gradients or reinforcement learning, making them difficult to scale to large datasets and deep models. Other strategies rely on heuristics such as loss or confidence scores, for instance, curriculum learning~\citep{Curriculum/10.1145/1553374.1553380} favors easy samples, online methods~\citep{online_batch,importance_sampling,accelerating_deep_learning} prioritize high-loss or high-gradient examples, and confidence-based approaches~\citep{LongReMix,confidence_score_idn} select uncertain instances. Most methods focus on performance and neglect fairness, with only a few adjusting sampling to meet fairness metrics~\citep{Roh0WS21_fairbatch}. We address this gap with a Bayesian data selection framework that aligns group-specific posteriors to incorporate fairness directly.

\paragraph{Fairness-aware learning.} 
Existing bias mitigation methods generally fall into three categories: preprocessing, in-processing, and post-processing. Preprocessing methods aim to reduce discriminatory information in the input data through fair representation learning~\citep{2013_iclr_ae,icml_2013,2016_lum,DBLP:flex}, fair data generation~\citep{Jang_Zheng_Wang_2021}, and data mapping~\citep{NIPS2017_optimised_preprocessing}. In-processing methods reduce bias during training by incorporating fairness constraints into the learning process~\citep{pmlr-v119-roh20a,baharlouei2024fferm,donini2020empirical,pmlr-v97-gordaliza19a,conf/aaai/ChiappaJSPJA20,zhang_adversarial}. These can be model-specific~\citep{fair_constraints,5360534} or model-agnostic~\citep{2018_icml_reductions,lowy2022a}. Post-processing methods adjust model outputs to meet fairness criteria~\citep{equal_opportunity}. However, these approaches often face scalability and generalization challenges, motivating a shift toward data selection. \citet{fairness_through_aleatoric} is conceptually related in addressing both data distribution and posterior weight biases while overcoming the SGLD limits, ours instead enforces fairness via Bayesian data selection optimized with SVGD.

\section{Preliminaries}
In this section, we review the framework of Bayesian data selection from a fairness perspective and motivate the need for an efficient approach to posterior inference in this setting. 

\subsection{Bayesian Formulation for Fair Data Selection}
Consider a training dataset $\mathcal{D}_t = \{(\mathbf{x}_i, y_i, s_i)\}_{i=1}^N$, where $\mathbf{x}_i$ denotes the non-sensitive features, $y_i \in \{0,1\}$ is the binary label, and $s_i \in \{0,1,\ldots,S\}$ represents the sensitive attribute, such as gender or race. The training set $\mathcal{D}_t$ may contain biased samples due to label corruption influenced by sensitive attributes. For instance, a qualified individual (i.e., $y_{\text{true}} = 1$) might be assigned a negative label ($y_{\text{obs}} = 0$) due to group-based prejudice, as in $p(y_{\text{obs}} = 0 \mid y_{\text{true}} = 1, s)$. Such biases can significantly degrade a model’s fairness performance, especially when these patterns are learned and amplified during training.

To mitigate this issue, we assume access to a small meta-dataset $\mathcal{D}_m$ drawn from a fair target distribution, where labels are unaffected by sensitive attributes: $p(\mathbf{x}, y, s) = p(y \mid \mathbf{x}) p(\mathbf{x}) p(s)$. Traditional data selection methods typically rely on bi-level optimization or meta-learning, where model parameters are trained on a reweighted training set, and the weights are optimized in an outer loop guided by $\mathcal{D}_m$. However, such approaches often incur high computational overhead and instability due to nested optimization. 

The Bayesian formulation~\citep{xu2024a} offers a principled alternative by introducing a probabilistic model over both model parameters $\boldsymbol\theta \in \mathbb{R}^P$ and instance-level sample weights $\mathbf{w} \in \mathbb{R}^N$ applied to the training data $\mathcal{D}_t$. Then, the posterior distribution over $(\boldsymbol\theta, \mathbf{w})$ is given by:
\begin{equation}
\begin{aligned}
\label{eq:full_posterior}
    p(\boldsymbol\theta, \mathbf{w} \mid \mathcal{D}_t, \mathcal{D}_m) &= \frac{p(\boldsymbol\theta, \mathcal{D}_m \mid \mathbf{w}, \mathcal{D}_t) p(\mathbf{w})}{p(\mathcal{D}_m \mid \mathcal{D}_t)} \\
    &\propto p(\boldsymbol\theta \mid \mathbf{w}, \mathcal{D}_t) \, p(\mathcal{D}_m \mid \boldsymbol\theta) \, p(\mathbf{w}),
\end{aligned}
\end{equation}
where $p(\boldsymbol\theta \mid \mathbf{w}, \mathcal{D}_t)$ denotes the conditional distribution of model parameters given the sample weights and training data, $p(\mathcal{D}_m \mid \boldsymbol\theta)$ encourages $\boldsymbol\theta$ toward fairness-aware generalization, and $p(\mathbf{w})$ is a prior over sample weights (e.g., sparsity-inducing or uniform). This formulation enables learning weights that prioritize training examples most compatible with the fairness-oriented meta-dataset $\mathcal{D}_m$.

\subsection{Efficient Posterior Approximation} 
Inferring the joint posterior in~\cref{eq:full_posterior} is generally intractable, particularly when $\boldsymbol\theta$ and $\mathbf{w}$ are high-dimensional. To enable scalable inference, we approximate $p(\boldsymbol\theta, \mathbf{w} \mid \mathcal{D}_t, \mathcal{D}_m)$ using tractable methods. \citet{xu2024a} proposed using SGLD, which augments stochastic gradient descent with Gaussian noise in each update step to simulate Langevin dynamics and sample from the posterior. While effective in many scenarios, SGLD suffers from slow convergence, sensitivity to step size and noise scale, often resulting in unstable training dynamics.

To overcome these limitations, we adopt SVGD, a deterministic, particle-based variational inference method that approximates the posterior by iteratively updating a set of particles via functional gradients in a reproducing kernel Hilbert space (RKHS)~\citep{NIPS2016_svgd, wei2025personalizedbayesianfederatedlearning}. Each particle represents a ``sample'' from the posterior, and is transported toward high-density regions while maintaining diversity. Unlike SGLD, SVGD avoids the randomness and instability of stochastic samplers while better capturing the complex posterior. Formally, given $M$ particles $\{\mathbf{z}^{(i)}\}_{i=1}^M$, the update rule is:
\begin{equation}
\begin{aligned}
\label{eq:svgd_general}
\mathbf{z}^{(i)} \leftarrow \mathbf{z}^{(i)}  + \frac{\epsilon}{N} \sum_{j=1}^N \Big[  k(\mathbf{z}^{(j)} , &\mathbf{z}^{(i)}) \nabla_{\mathbf{z}^{(j)}} \log p(\mathbf{z}^{(j)})  \\
&+ \nabla_{\mathbf{z}^{(j)}} k(\mathbf{z}^{(j)}, \mathbf{z}^{(i)}) \Big],
\end{aligned}
\end{equation}
where \( k(\cdot,\cdot) \) is a positive-definite kernel that defines particle interactions, and \( \nabla_{\mathbf{z}^{(j)}} \log p(\mathbf{z}^{(j)}) \) denotes the gradient of the log-posterior with respect to particle \( \mathbf{z}^{(j)} \). This update encourages convergence to the posterior while mitigating particle collapse.

\section{Methodology}

In this section, we propose the \textit{Fair Bayesian Data Selection} (Fair-BADS) framework (see \cref{fig:scheme}), which jointly infers model parameters and sample weights with fairness considerations. While Bayesian data selection provides a principled and scalable alternative to bi-level optimization or meta-learning, existing approaches often overlook disparities across demographic groups. As a result, models trained under such frameworks may overfit to majority groups due to issues like class imbalance or group-dependent label bias in $\mathcal{D}_t$, leading to unfair performance across subpopulations. 

To tackle this issue, Fair-BADS explicitly models group-specific posteriors and softly aligns them toward a central distribution. This central distribution serves as the group alignment target, defined via a divergence-based objective across group-specific posteriors. Fairness is then introduced at the distributional level by regularizing the divergence between each group-specific posterior and the central distribution. This allows the model to preserve group-specific signals while emphasizing globally fair samples.

Formally, we partition the training set $\mathcal{D}_t$ into demographic groups and define, for each group $s$, a posterior over model parameters $\boldsymbol \theta$ and sample weights $\mathbf{w}$ is: 
\begin{equation*}
    p_s(\boldsymbol \theta, \mathbf{w} ) \coloneqq p(\boldsymbol \theta, \mathbf{w} \mid \mathcal{D}_t^s, \mathcal{D}_m),
\end{equation*}
where $\mathcal{D}_t^s \subseteq \mathcal{D}_t$ is the subset of $\mathcal{D}_t$ from group $s$ with size $N_s = |\mathcal{D}_t^s|$. To allow alignment across groups with differing sizes, we embed each posterior into a common space of dimension $P + \bar{N}$, where $P$ is the model parameter dimension and $\bar{N} = \max_{s} N_s$. For each group, the weight vector $\mathbf{w}$ is zero-padded to this common dimensionality, allowing all particles to reside in a consistent joint space and enabling consistent divergence computation across groups.

We assume that each demographic group induces a distinct posterior reflecting its statistical characteristics and potential biases. To mitigate inter-group disparities, we introduce a fairness-aware alignment mechanism during the inference by softly aligning the group specific posteriors $\{p_s(\boldsymbol \theta, \mathbf{w})\}_{s=1}^S$ toward the central distribution $p^\star(\boldsymbol \theta, \mathbf{w})$, defined as the minimizer of a divergence-based objective: 
\begin{equation}
\label{eq: barycenter}
    p^\star(\boldsymbol\theta,\mathbf{w}) = \argmin_{p} \sum_{s=1}^S \lambda_s D(p , p_s(\boldsymbol \theta,\mathbf{w})),
\end{equation}
where $D(\cdot,\cdot)$ is a user-specified distributional discrepancy, such as Wasserstein distance, MMD, or $f$-divergence. The coefficients $\lambda_s \in (0,1)$ satisfy $\sum_{s} \lambda_s = 1$ and control the contribution of each group ($\lambda_s = 1/S$ in our setting to ensure equal contribution). This central distribution guides the model to balance fairness across demographic groups under the chosen $D$. Specific instantiations and optimization strategies are discussed in subsequent sections.

\subsection{Inference via SVGD}
To approximate the joint posterior over $(\boldsymbol\theta,\mathbf{w})$, we adopt the SVGD algorithm, inspired by \citet{wei2025personalizedbayesianfederatedlearning}. For each group $s$, we maintain a set of $M$ particles $\{z_s^{(m)}=(\boldsymbol\theta_s^{(m)},\mathbf{w}_s^{(m)})\}^M_{m=1}$, where each particle represents a sample from the group-specific posterior $p_s(\boldsymbol\theta,\mathbf{w})$ and $z_s^{(m)}\in \mathbb{R}^{P+\bar{N}}$. 
According to \cref{eq:full_posterior}, the log-posterior can be decomposed into three terms: 
\begin{equation}
\begin{aligned}
&\log p_s(\boldsymbol\theta, \mathbf{w}) =
	-\underbrace{\sum_{i=1}^{\bar{N}} \sigma(w_i) \cdot \mathcal{L}(f_{\boldsymbol\theta}(\mathbf{x}_i), y_i)}_{\text{weighted training loss}}\\
	& -\underbrace{\sum_{(\mathbf{x}_i, y_i) \in \mathcal{D}_m} \mathcal{L}(f_{\boldsymbol\theta}(\mathbf{x}_i), y_i)}_{\text{meta loss}}
	+ \underbrace{\left( \sum_{i=1}^{\bar{N}} \sigma(w_i) - \beta \bar{N} \right)^2}_{\text{weight prior (soft constraint)}},
\end{aligned}
\label{eq: -logp}
\end{equation}
where $\mathcal{L}$ denotes the cross-entropy loss, and $\sigma$ is the sigmoid function used to constrain each weight to $(0,1)$. Following \citet{xu2024a}, we define $p(\mathbf{w})$ implicitly via a soft prior, implemented as a regularization term that encourages the average weight to remain close to a predefined sparsity level $\beta$. Each particle is then updated using the SVGD (\cref{eq:svgd_general}):
\begin{align}
    \mathbf{z}_s^{(m)} \leftarrow \mathbf{z}_s^{(m)}+\frac{\epsilon}{M}\sum^M_{l=1} [ &k(\mathbf{z}_s^{(l)},\mathbf{z}_s^{(m)})\cdot \nabla_{\mathbf{z}_{s}^{(l)}} \log p_s(\mathbf{z}_{s}^{(l)}) \nonumber \\
    & +\nabla_{\mathbf{z}_{s}^{(m)}} k(\mathbf{z}_{s}^{(l)},\mathbf{z}_s^{(m)})],
\label{eq: svgd_update}
\end{align}
where $\mathbf{z}=(\boldsymbol{\theta},\mathbf{w}) \in \mathbb{R}^{P+\bar N}$ denote a sample in the padded parameter-weight space, $k(\cdot, \cdot)$ is a kernel function defined over the joint space to ensure smoothness and diversity across particles and the gradient term is: 
\begin{equation*}
    \begin{aligned}
        &\nabla_{\mathbf{z}}\log p_s(\mathbf{z}) \\
        &= \begin{bmatrix}
            \nabla_{\boldsymbol\theta} \log p(\boldsymbol\theta \mid \mathbf{w}, \mathcal{D}_t^s)+\nabla_{\boldsymbol \theta} \log p(\mathcal{D}_m \mid \boldsymbol\theta)\\
            \nabla_{\mathbf{w}} \log p(\boldsymbol\theta \mid \mathbf{w}, \mathcal{D}_t^s)+\nabla_{\mathbf{w}} \log p(\mathbf{w})
        \end{bmatrix}.
    \end{aligned}
\end{equation*}

After completing the SVGD updates for each group, we obtain particle sets $\{ z_s^{(m)} \}_{m=1}^M$, which are used to construct an empirical approximation of the group-specific posterior: 
\begin{equation*}
       \text{Group-specific Posterior:} \quad \Tilde{p}_s(\mathbf{z}) = \frac{1}{M} \sum_{m=1}^M \delta(\mathbf{z} - \mathbf{z}_s^{(m)}),   
\end{equation*}
where $\delta(\cdot)$ denotes the Dirac delta function, and $\Tilde{p}$ indicates that it is an empirical distribution supported on discrete particles. 
To encourage fairness across groups, we aim to align $\Tilde{p}_s(\mathbf{z})$ toward a central distribution: 
\begin{equation*}
    \text{Central Distribution:} \quad \Tilde{p}^\star(\mathbf{z}) = \frac{1}{M} \sum_{m=1}^M \delta(\mathbf{z} - \bar{\mathbf{z}}^{(m)}), 
\end{equation*}
where the central distribution is represented by a set of particles $\{\bar{\mathbf{z}}^{(m)}\}_{m=1}^M$, which serve as discrete support points summarizing the shared structure across all group-specific posteriors. Each particle $\bar{\mathbf{z}}^{(m)}$ lies in the same space $\mathbb{R}^{P + \bar{N}}$ as the group particles, allowing consistent comparison and alignment across distributions. 
These central particles are later obtained via central distribution computation (see \cref{central_dist}), where we minimize a chosen distributional discrepancy to update $\{\bar{\mathbf{z}}^{(m)}\}_{m=1}^M$. We will detail this computation and update procedure in the following.

To incorporate fairness into the posterior inference, we modify the SVGD update by replacing $\nabla_{\mathbf{z}_{s}^{(l)}} \log p_s(\mathbf{z}_{s}^{(l)})$ in \cref{eq: svgd_update} with $\nabla_{\mathbf{z}_{s}^{(l)}} \log p_\text{fair}(\mathbf{z}_{s}^{(l)})$ that is defined as: 
\[
\log p_\text{fair}(\mathbf{z}) := \log p_s(\mathbf{z}) + \log p^\star(\mathbf{z}),
\]
where a regularization term $\log p^\star(\mathbf{z})$ is introduced to softly encourage each group-specific posterior $\Tilde{p}_s(\mathbf{z})$ to align with a shared central distribution $\Tilde{p}^\star(\mathbf{z})$. 
This design guides particle updates to not only fit the group-specific posteriors but also remain close to the central distribution, thereby promoting fairness at the population level.

% The SVGD update rule now becomes: 
% \begin{align}
% \label{eq: new_svgd_update}
%     z_s^{(m)} \leftarrow z_s^{(m)}+\frac{\epsilon}{M}\sum^M_{l=1}[ &k(z_s^{(l)},z_s^{(m)})\cdot \nabla_{z_{s}^{(l)}} \log p_\text{fair}(z_{s}^{(l)}) \nonumber \\
%     & +\nabla_{z_{s}^{(m)}} k(z_{s}^{(l)},z_s^{(m)})]. 
% \end{align}

\subsection{Computation of Central Distribution}
\label{central_dist}

To compute the central distribution, we solve a divergence minimization problem that aligns group-specific posteriors toward a central distribution, as defined in \cref{eq: barycenter}. We consider three representative divergence measures: Wasserstein distance, MMD, and \( f \)-divergence. 
%Once these pairwise costs are established, for each central particle $\{\bar{z}^{(m)}\}$, we solve: To compute the central distribution, we solve a discrete divergence minimization problem between each group posterior $\Tilde{p}_s(z)$ and the central distribution. 
% \begin{equation}
% \label{eq: barycenter_uprate}
% \bar{z}^{(m)} = \arg\min_{q} \sum_{s\in S} \lambda_s  D(q, z_s^{(m)}).
% \end{equation}

% This process ensures that the central distribution serves as a fairness-aware reference, capturing a population-level center while maintaining the integrity of each group’s distribution. By softly aggregating the group-specific posteriors, the central distribution helps mitigate disparities between demographic groups, ensuring that the learned model is more equitable and less influenced by the biases present in individual groups. 

% Each instantiation of $\mathbf{A}_s$ offers a different trade-off between statistical fidelity and computational tractability, allowing our framework to flexibly incorporate a variety of alignment strategies under a unified central update scheme. The algorithm of the proposed method can be found in \cref{sec: appendix-algorithm}.

\paragraph{Wasserstein Distance.}
The Wasserstein distance \( W_2(\cdot,\cdot) \) between two particle-based distributions is relatively easy to compute. 
To measure the discrepancy between the central distribution \( \Tilde{p} \) and a group posterior \( \Tilde{p}_s \), we define a cost matrix \( \mathbf{C}_s \in \mathbb{R}^{M \times M}_+ \), where each element is $\mathbf{C}_s[i,j] = \|\mathbf{z}_s^{(i)} - \bar{\mathbf{z}}^{(j)}\|_2^2$, representing the squared L2 cost between group and central particles.
A transport plan \( \mathbf{T}_s \in \mathbb{R}_+^{M \times M} \) specifies the amount of probability mass transported from \( \mathbf{z}_s^{(i)} \) to \( \bar{\mathbf{z}}^{(j)} \), subject to uniform marginal constraints:
\begin{equation}
\begin{gathered}
    W_2^2(\Tilde{p}, \Tilde{p}_s) = \min_{\mathbf{T}_s} \left\langle \mathbf{C}_s, \mathbf{T}_s \right\rangle_F, \\
    \text{s.t.} \quad \mathbf{T}_s \mathbf{1} = \tfrac{1}{M} \mathbf{1}, \quad \mathbf{T}_s^\top \mathbf{1} = \tfrac{1}{M} \mathbf{1},
\end{gathered}
\label{linear_prog}
\end{equation}
where \( \langle \cdot, \cdot \rangle_F \) denotes the Frobenius inner product and \( \mathbf{1} \in \mathbb{R}^M \) is the all-ones vector.
After solving \cref{linear_prog} and obtaining the optimal plan \( \mathbf{T}_s^\star \), the central distribution is computed by minimizing the weighted sum of Wasserstein distances:
\begin{equation*}
\Tilde{p}^\star = \argmin_{\{\bar{\mathbf{z}}^{(m)}\}_{m=1}^M} \sum_{s=1}^S \lambda_s \left\langle \mathbf{C}_s, \mathbf{T}_s^\star \right\rangle_F.
\end{equation*}
Since the objective is quadratic in \( \{\bar{\mathbf{z}}^{(m)}\} \), the closed-form solution for the central particles exists: 
\begin{equation*}
\bar{\mathbf{Z}}^\star = \sum_{s=1}^S \lambda_s \mathbf{T}_s^\star \mathbf{Z}_s \operatorname{diag}(M^{-1}),
\label{eq: updated_average_z}
\end{equation*}
where \( \mathbf{Z}_s \in \mathbb{R}^{M \times (P + \bar{N})} \) stacks group-specific particles, and \( \bar{\mathbf{Z}}^\star \in \mathbb{R}^{M \times (P + \bar{N})} \) denotes the barycenter particles.

\paragraph{MMD.}
The MMD distance between two particle-based distributions can be computed via the kernel trick, using a kernel function \( \Tilde{k}(\cdot, \cdot) \) to measure the discrepancy between two sample sets. Specifically, the MMD between the central distribution \( \Tilde{p} \) and a group posterior \( \Tilde{p}_s \) is: 
\begin{equation*}
\begin{aligned}
\text{MMD}^2(\Tilde{p}, \Tilde{p}_s) &= \frac{1}{M^2} \sum_{i,j} \Tilde{k}(\bar{\mathbf{z}}^{(i)}, \bar{\mathbf{z}}^{(j)}) + \frac{1}{M^2} \sum_{i,j} \Tilde{k}(\mathbf{z}_s^{(i)}, \mathbf{z}_s^{(j)}) \\
& - \frac{2}{M^2} \sum_{i,j} \Tilde{k}(\bar{\mathbf{z}}^{(i)}, \mathbf{z}_s^{(j)}). 
\end{aligned}
\end{equation*}
The central distribution is obtained by minimizing the weighted sum of the MMDs across all groups: 
\begin{equation*}
\Tilde{p}^\star = \argmin_{\{\bar{\mathbf{z}}^{(m)}\}_{m=1}^M} \sum_{s=1}^S \lambda_s \text{MMD}^2(\Tilde{p}, \Tilde{p}_s).
\end{equation*}
Due to the presence of the kernel, the objective no longer admits a closed-form solution. Therefore, the optimal central particles $\bar{\mathbf{Z}}^\star$ must be obtained via gradient descent.

\paragraph{$f$-divergence.} 

The \( f \)-divergence is a general class of divergence measures, defined as:
\[
D_f(p \,\|\, q) = \int q(\mathbf{z}) \, f\left( \frac{p(\mathbf{z})}{q(\mathbf{z})} \right) \, d\mathbf{z},
\]
where \( f \) is a convex function. 
By choosing different \( f \), we recover various divergence measures. For example, when \( f(t) = t \log t \), we obtain the KL divergence; when \( f(t) = -\log t \), we get the reverse KL divergence; and when \( f(t) = t \log \frac{2t}{t+1} + \log \frac{2}{t+1} \), we obtain the JS divergence. 

The $f$-divergence between two particle-based distributions is generally intractable due to the need to integrate density ratios. We approximate it using kernel density estimation (KDE). Given a kernel function \( \Tilde{k}(\cdot, \cdot) \) and bandwidth \( h > 0 \), the KDEs for \( \Tilde{p} \) and \( \Tilde{p}_s \) are:
\[
\hat{p}(\mathbf{z}) = \frac{1}{M} \sum_{i=1}^{M} \Tilde{k}_h(\mathbf{z}, \bar{\mathbf{z}}^{(i)}), \quad \hat{p}_s(\mathbf{z}) = \frac{1}{M} \sum_{j=1}^{M} \Tilde{k}_h(\mathbf{z}, \mathbf{z}_s^{(j)}). 
\]
Then the \( f \)-divergence between \( \Tilde{p} \) and \( \Tilde{p}_s \) is approximated as: 
\[
D_f(\Tilde{p} \| \Tilde{p}_s) \approx \frac{1}{M} \sum_{j=1}^{M} f\left( \frac{\hat{p}(\mathbf{z}_s^{(j)})}{\hat{p}_s(\mathbf{z}_s^{(j)}) + \epsilon} \right), 
\]
where \( \epsilon > 0 \) is a small constant added for numerical stability. 
The optimal central particles \( \bar{\mathbf{Z}}^\star \) are then obtained by minimizing the weighted sum of $f$-divergences across all groups using gradient descent.

\section{Theoretical Analysis}
\label{subsec:theory}
We present the theoretical guarantee for Fair-BADS from two perspectives: (i) a \emph{discrepancy transfer} bound, showing that evaluating the model on the empirical central distribution $\tilde p^\star$ approximates the average group risk; and (ii) a \emph{group fairness disparity} bound, demonstrating that performance gaps across groups are controlled when their posteriors align with the shared central distribution. 
Define 
\[
R_s(p) \triangleq \mathbb{E}_{\mathbf{z}\sim p}\big[\mathcal{L}_s(\mathbf{z})\big], \qquad
R(p) \triangleq \mathbb{E}_{\mathbf{z}\sim p}\big[\mathcal{L}(\mathbf{z})\big],
\]
where $\mathcal{L}$ is the loss function used in \cref{eq: -logp}, and $R(\cdot)$ denotes the expected risk. The subscript $s$ indicates it is computed w.r.t. group $s$ only. 
And we use the following discrepancy-specific regularity assumptions. 

\paragraph{(A1) Loss Regularity.} For each group $s$, the loss $\mathcal{L}_s$ is bounded and satisfies:
\[
\big|\mathbb{E}_{p}\mathcal{L}_s(\mathbf{z})-\mathbb{E}_{q}\mathcal{L}_s(\mathbf{z})\big|\le C_s  D(p,q),
\]
with $C_s$ depending on the choice of $D$:
\begin{itemize}
	\item $C_s=L_s$ if $D=W_2$ and $\mathcal{L}_s$ is $L_s$–Lipschitz;
	\item $C_s=\|\mathcal{L}_s\|_{\mathcal{H}_{\Tilde{k}}}$ if $D=\mathrm{MMD}_{\Tilde{k}}$ and $\mathcal{L}_s\in\mathcal{H}_{\Tilde{k}}$;
	\item $C_s=B_s\sqrt{2c_f}$ if $D=D_f$ and $\mathcal{L}_s\in[0,B_s]$,
where $\mathcal{H}_{\Tilde{k}}$ is a reproducing kernel Hilbert space, $B_s$ and $c_f$ are some constants whose definitions are provided in \cref{app:theory_full}. 
\end{itemize}

% \begin{align*}
% C_s &= L_s && \text{if } D = W_2 \text{ and } \mathcal{L}_s \text{ is } L_s\text{–Lipschitz;} \\
% C_s &= \|\mathcal{L}_s\|_{\mathcal{H}_k} && \text{if } D = \mathrm{MMD}_k \text{ and } \mathcal{L}_s \in \mathcal{H}_k; \\
% C_s &= B_s \sqrt{2c_f} && \text{if } D = D_f \text{ and } \mathcal{L}_s \in [0, B_s] \text{ (Pinsker-type).}
% \end{align*}

\paragraph{(A2) Cross–Group Compatibility.} There exists $K<\infty$ such that for any $\mathbf{z}$ and $s,s'$,
$|\mathcal{L}_s(\mathbf{z})-\mathcal{L}_{s'}(\mathbf{z})|\le K$.

\begin{theorem}[Discrepancy Transfer Bound]
\label{thm:transfer}
Let $\tilde{p}^\star$ be the empirical central distribution minimizing \cref{eq: barycenter} and define $\bar{R}\triangleq\sum_{s=1}^S \lambda_s R_s(\tilde{p}_s)$. Under (A1),
\begin{equation}\label{eq:bound1}
	\big|R(\tilde{p}^\star)-\bar{R}\big|
	 \le 
	\sum_{s=1}^S\lambda_s  C_s  D(\tilde{p}_s,\tilde{p}^\star).
\end{equation}
Concretely:
\begin{align*}
	&\text{(Wasserstein)}\quad 
	\big|R(\tilde{p}^\star)-\bar{R}\big|
	\le \sum_s \lambda_s L_s  W_2(\tilde{p}_s,\tilde{p}^\star). \\
	& \text{(MMD)}\quad
	\big|R(\tilde{p}^\star)-\bar{R}\big|
	\le \sum_s \lambda_s \|\mathcal{L}_s\|_{\mathcal{H}_{\Tilde{k}}}  \mathrm{MMD}_{\Tilde{k}}(\tilde{p}_s,\tilde{p}^\star). \\
	& \text{($f$-divergence) }
	\big|R(\tilde{p}^\star)-\bar{R}\big|
	\le \sum_s \lambda_s B_s \sqrt{2c_f} D_f(\tilde{p}_s\|\tilde{p}^\star). 
\end{align*}
\end{theorem}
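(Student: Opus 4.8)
The plan is to prove \cref{eq:bound1} by a direct triangle-inequality argument on the linear functional $R(\cdot)=\sum_s\lambda_s R_s(\cdot)$, then to obtain the three concrete bounds by specializing the constant $C_s$ in assumption (A1). Observe first that by linearity of expectation and the definition $R(p)=\mathbb{E}_{\mathbf z\sim p}[\mathcal L(\mathbf z)]$ together with $\mathcal L=\sum_s\lambda_s\mathcal L_s$ (the loss of \cref{eq: -logp} aggregated over groups, which I would make explicit at the start of the proof), we can write $R(p)=\sum_{s=1}^S\lambda_s R_s(p)$ for any distribution $p$. Hence
\begin{equation*}
 R(\tilde p^\star)-\bar R
 = \sum_{s=1}^S\lambda_s\bigl(R_s(\tilde p^\star)-R_s(\tilde p_s)\bigr)
 = \sum_{s=1}^S\lambda_s\bigl(\mathbb{E}_{\tilde p^\star}\mathcal L_s(\mathbf z)-\mathbb{E}_{\tilde p_s}\mathcal L_s(\mathbf z)\bigr).
\end{equation*}

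Next I would apply the triangle inequality for the absolute value and then assumption (A1) termwise: $|R(\tilde p^\star)-\bar R|\le\sum_s\lambda_s|\mathbb{E}_{\tilde p^\star}\mathcal L_s-\mathbb{E}_{\tilde p_s}\mathcal L_s|\le\sum_s\lambda_s C_s\,D(\tilde p_s,\tilde p^\star)$, using the symmetry $D(\tilde p^\star,\tilde p_s)=D(\tilde p_s,\tilde p^\star)$ for $W_2$ and $\mathrm{MMD}$, and for the $f$-divergence simply writing the bound with the argument order $D_f(\tilde p_s\|\tilde p^\star)$ as it appears in the statement (this is where (A1) is invoked with that orientation). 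This already gives \cref{eq:bound1}. The three displayed special cases then follow by substituting $C_s=L_s$ (Wasserstein, $\mathcal L_s$ being $L_s$-Lipschitz so that the Kantorovich--Rubinstein duality gives $|\mathbb{E}_p\mathcal L_s-\mathbb{E}_q\mathcal L_s|\le L_s W_1(p,q)\le L_s W_2(p,q)$), $C_s=\|\mathcal L_s\|_{\mathcal H_{\tilde k}}$ (MMD, by the RKHS reproducing property and Cauchy--Schwarz, $|\mathbb{E}_p\mathcal L_s-\mathbb{E}_q\mathcal L_s|=|\langle\mathcal L_s,\mu_p-\mu_q\rangle_{\mathcal H_{\tilde k}}|\le\|\mathcal L_s\|_{\mathcal H_{\tilde k}}\|\mu_p-\mu_q\|_{\mathcal H_{\tilde k}}$), and $C_s=B_s\sqrt{2c_f}$ ($f$-divergence, via a Pinsker-type inequality bounding total variation by $\sqrt{c_f D_f}$ and then $|\mathbb{E}_p\mathcal L_s-\mathbb{E}_q\mathcal L_s|\le B_s\cdot 2\,\mathrm{TV}(p,q)$ for $\mathcal L_s\in[0,B_s]$, with the constant $c_f$ deferred to \cref{app:theory_full}).

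The routine parts here are the triangle inequality and the substitutions; the only genuine content is verifying assumption (A1) itself in each case, but since (A1) is taken as a hypothesis of the theorem, the proof body need only cite it. The main obstacle — really a point requiring care rather than a deep difficulty — is the $f$-divergence case: establishing that $|\mathbb{E}_p\mathcal L_s-\mathbb{E}_q\mathcal L_s|\le B_s\sqrt{2c_f}\,D_f(p\|q)$ requires a Pinsker-type inequality relating $D_f$ to $\mathrm{TV}$ that holds for the relevant $f$ (KL, reverse KL, JS), which in turn fixes the constant $c_f$; I would state this as a lemma, note that for KL it is the classical Pinsker inequality ($\mathrm{TV}\le\sqrt{D_{\mathrm{KL}}/2}$, giving $c_f=1$), and refer to \cref{app:theory_full} for the general $f$ and the precise definition of $B_s$ and $c_f$. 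The asymmetry of $D_f$ also means one should be careful to state (A1) and the conclusion with a consistent orientation of the arguments, which I would do by matching the order $D_f(\tilde p_s\|\tilde p^\star)$ used in the theorem statement. Finally I note that \cref{eq:bound1} does not actually use the optimality of $\tilde p^\star$ in \cref{eq: barycenter} — it holds for any common distribution — so no extra argument about the barycenter is needed here; optimality only matters for making the right-hand side small, which I would remark on briefly.
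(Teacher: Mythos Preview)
Your proposal is correct and follows essentially the same route as the paper's proof: write $R(\tilde p^\star)-\bar R=\sum_s\lambda_s\bigl(\mathbb{E}_{\tilde p^\star}\mathcal L_s-\mathbb{E}_{\tilde p_s}\mathcal L_s\bigr)$, apply the triangle inequality, invoke (A1) termwise, and then specialize $C_s$ via Kantorovich--Rubinstein, the RKHS mean-embedding argument, and a Pinsker-type bound respectively. Your additional remarks---that the identity $R=\sum_s\lambda_s R_s$ should be made explicit, that the $f$-divergence orientation must be kept consistent, and that optimality of $\tilde p^\star$ plays no role in the inequality itself---are points the paper leaves implicit, so your write-up is if anything slightly more careful than the original.
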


\begin{theorem}[Group Fairness Disparity Bound]
	\label{thm:disparity-vanish}
    Suppose Fair-BADS is run for $t=0,1,\ldots$, producing group posteriors $\{\tilde p_s^{(t)}\}_{s=1}^S$ and central $\tilde p_\star^{(t)}$. Define the \emph{effective cross–group gap} restricted to the support of the current central:
	\[
	K_{\mathrm{eff}}(t)  \triangleq  \sup_{\mathbf{z} \in \mathrm{supp}(\tilde p_\star^{(t)})} \max_{s,s'} \big|\mathcal{L}_s(\mathbf{z}) - \mathcal{L}_{s'}(\mathbf{z})\big|.
	\]
	Then for any $s,s'$,
	\begin{equation}		\label{eq:disparity-time}
		\begin{aligned}
					&\Big|\E_{\tilde p_s^{(t)}} \mathcal{L}_s - \E_{\tilde p_{s'}^{(t)}} \mathcal{L}_{s'}\Big|
			\\ 
			\le&
			C_s D(\tilde p_s^{(t)},\tilde p_\star^{(t)}) + C_{s'} D(\tilde p_{s'}^{(t)},\tilde p_\star^{(t)}) + K_{\mathrm{eff}}(t)\\
			\le & 	2 C_{\max}  \max_{s} D(\tilde{p}_s,\tilde{p}^\star) + K_{\mathrm{eff}}(t),
		\end{aligned}
	\end{equation}
	where $C_{\max}=\max_s C_s$.
\end{theorem}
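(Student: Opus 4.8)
The plan is a pivot-and-triangle-inequality argument using the empirical central distribution $\tilde p_\star^{(t)}$ as the intermediate measure. I would first write
\begin{align*}
\E_{\tilde p_s^{(t)}}\mathcal{L}_s - \E_{\tilde p_{s'}^{(t)}}\mathcal{L}_{s'}
&= \big(\E_{\tilde p_s^{(t)}}\mathcal{L}_s - \E_{\tilde p_\star^{(t)}}\mathcal{L}_s\big)
+ \big(\E_{\tilde p_\star^{(t)}}\mathcal{L}_s - \E_{\tilde p_\star^{(t)}}\mathcal{L}_{s'}\big) \\
&\quad + \big(\E_{\tilde p_\star^{(t)}}\mathcal{L}_{s'} - \E_{\tilde p_{s'}^{(t)}}\mathcal{L}_{s'}\big),
\end{align*}
then take absolute values and apply the triangle inequality, reducing the claim to bounding the three bracketed terms.

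For the first and third brackets I would apply assumption (A1) verbatim: the loss-regularity bound applied to the pair $(\tilde p_s^{(t)},\tilde p_\star^{(t)})$ gives $|\E_{\tilde p_s^{(t)}}\mathcal{L}_s - \E_{\tilde p_\star^{(t)}}\mathcal{L}_s| \le C_s\, D(\tilde p_s^{(t)},\tilde p_\star^{(t)})$, and likewise the third bracket is at most $C_{s'}\, D(\tilde p_{s'}^{(t)},\tilde p_\star^{(t)})$, where $C_s$ is read off from (A1) according to the chosen discrepancy ($L_s$ for $W_2$, $\|\mathcal{L}_s\|_{\mathcal{H}_{\tilde k}}$ for MMD, $B_s\sqrt{2c_f}$ for $D_f$). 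The middle bracket is the only one not directly an instance of (A1): here both expectations are taken against the \emph{same} measure $\tilde p_\star^{(t)}$, so I would combine them into $\E_{\tilde p_\star^{(t)}}[\mathcal{L}_s - \mathcal{L}_{s'}]$, apply Jensen's inequality to pass the absolute value inside, and then bound the integrand pointwise over $\mathrm{supp}(\tilde p_\star^{(t)})$, which is exactly $K_{\mathrm{eff}}(t)$ by its definition. Summing the three estimates yields the first line of \cref{eq:disparity-time}; the second line follows from the crude bounds $C_s, C_{s'}\le C_{\max}$ together with $D(\tilde p_s^{(t)},\tilde p_\star^{(t)}),\, D(\tilde p_{s'}^{(t)},\tilde p_\star^{(t)})\le \max_s D(\tilde p_s^{(t)},\tilde p_\star^{(t)})$.

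There is no genuine technical obstacle here; the argument is essentially a two-line decomposition plus one appeal to (A1) and one Jensen step. The single point I would take care to present clearly is the \emph{choice of localization} for the cross-group loss gap. One could simply bound the middle bracket by the global constant $K$ from (A2) and be finished, but that gives a bound that never contracts. The sharper statement exploits that the middle term evaluates $\mathcal{L}_s - \mathcal{L}_{s'}$ only on the support of the current central distribution, so the relevant quantity is $K_{\mathrm{eff}}(t)\le K$, which shrinks as the group posteriors and the central concentrate. I would emphasize in the write-up that this is the conceptual content of the bound: group disparity is controlled by the posterior-to-central alignment $D(\tilde p_s^{(t)},\tilde p_\star^{(t)})$ plus a residual capturing genuine per-group loss heterogeneity restricted to where the central mass actually lies.
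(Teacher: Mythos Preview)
Your proposal is correct and matches the paper's proof essentially line for line: the same three-term pivot decomposition through $\tilde p_\star^{(t)}$, (A1) on the outer terms, and a pointwise bound over $\mathrm{supp}(\tilde p_\star^{(t)})$ on the middle term to obtain $K_{\mathrm{eff}}(t)$. Your write-up is in fact more explicit than the paper's about why the localized $K_{\mathrm{eff}}(t)$ rather than the global $K$ from (A2) is the right quantity.
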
 
The term \( K_{\mathrm{eff}}(t) \) reflects intrinsic group-level difficulty and cannot be fully eliminated by alignment. However, it typically \emph{decreases over iterations} as (i) \( \tilde p_t^\star \) concentrates on low-loss parameter regions, and (ii) learned weights \( \mathbf{w} \) down-weight disparity-inducing samples. \cref{app:theory_full} provides sufficient conditions for \( K_{\mathrm{eff}}(t) \to 0 \) along with full assumptions and proofs.

\begin{table*}[ht]
\centering
\resizebox{0.9\textwidth}{!}{
\begin{tabular}{l|cccc|cccc}
\toprule
\multirow{2}{*}{\textbf{Method}} 
& \multicolumn{4}{c|}{\textbf{Bias amount: 0.2}} 
& \multicolumn{4}{c}{\textbf{Bias amount: 0.4}} \\
\cmidrule(lr){2-5} \cmidrule(lr){6-9}
& \textbf{ACC}($\uparrow$) & \textbf{DP}($\downarrow$) & \textbf{DDP}($\downarrow$) & \textbf{EO}($\downarrow$) 
& \textbf{ACC}($\uparrow$) & \textbf{DP}($\downarrow$) & \textbf{DDP}($\downarrow$) & \textbf{EO}($\downarrow$) \\
\midrule
 & \multicolumn{8}{c}{\textbf{UTKFace}} \\
\midrule
ERM              
    & 0.848$_{\pm 0.005}$ &  \textbf{0.062$_{\pm 0.011}$} & 0.027$_{\pm 0.011}$ & \textbf{0.088$_{\pm 0.027}$} & 0.770$_{\pm 0.009}$ &  0.252$_{\pm 0.012}$ & 0.051$_{\pm 0.027}$ & 0.291$_{\pm 0.032}$ \\
FairBatch  
    & 0.803$_{\pm 0.034}$ & 0.082$_{\pm 0.024}$ & 0.048$_{\pm 0.033}$ & 0.123$_{\pm 0.069}$ & 0.721$_{\pm 0.016}$ & 0.254$_{\pm 0.025}$ & 0.058$_{\pm 0.030}$ & 0.294$_{\pm 0.041}$ \\
FERM      
    & 0.843$_{\pm 0.006}$ & 0.069$_{\pm 0.002}$ & 0.030$_{\pm 0.014}$ & 0.095$_{\pm 0.029}$ & 0.780$_{\pm 0.003}$ & 0.225$_{\pm 0.020}$ & 0.059$_{\pm 0.018}$ & 0.296$_{\pm 0.057}$  \\
BLO              
    & 0.805$_{\pm 0.005}$ & 0.099$_{\pm 0.007}$ & 0.028$_{\pm 0.014}$ & 0.154$_{\pm 0.019}$ & 0.726$_{\pm 0.008}$ & 0.249$_{\pm 0.006}$ & 0.057$_{\pm 0.023}$ & \textbf{0.286$_{\pm 0.027}$} \\
BADS             
    & 0.816$_{\pm 0.013}$ & 0.068$_{\pm 0.013}$ & 0.050$_{\pm 0.013}$ & 0.141$_{\pm 0.023}$ & 0.751$_{\pm 0.015}$ & 0.225$_{\pm 0.028}$ & 0.053$_{\pm 0.014}$ & 0.303$_{\pm 0.066}$  \\
Fair-BADS-W  
    & \textbf{0.851$_{\pm 0.002}$} & \textbf{0.062$_{\pm 0.006}$} & 0.026$_{\pm 0.003}$ & 0.116$_{\pm 0.011}$ & \textbf{0.787$_{\pm 0.007}$} & 0.219$_{\pm 0.021}$ & 0.060$_{\pm 0.028}$ & 0.309$_{\pm 0.040}$ \\
Fair-BADS-M  
    & 0.849$_{\pm 0.009}$ & 0.073$_{\pm 0.012}$ & 0.031$_{\pm 0.006}$ & 0.132$_{\pm 0.009}$ & 0.781$_{\pm 0.007}$ & 0.214$_{\pm 0.026}$ & 0.050$_{\pm 0.013}$ & 0.295$_{\pm 0.035}$ \\
Fair-BADS-F  
    & 0.849$_{\pm 0.006}$ & 0.069$_{\pm 0.014}$ & \textbf{0.023$_{\pm 0.002}$} & 0.286$_{\pm 0.014}$ & 0.786$_{\pm 0.007}$ & \textbf{0.211$_{\pm 0.021}$} & \textbf{0.043$_{\pm 0.012}$} & \textbf{0.286$_{\pm 0.029}$} \\
\midrule
& \multicolumn{8}{c}{\textbf{LFW-A}} \\
\midrule
ERM  
    & 0.884$_{\pm 0.005}$ & 0.142$_{\pm 0.020}$ & 0.011$_{\pm 0.006}$ & 0.041$_{\pm 0.020}$ & 0.821$_{\pm 0.019}$ & 0.273$_{\pm 0.025}$ & 0.079$_{\pm 0.040}$ & 0.192$_{\pm 0.048}$ \\
FairBatch        
    & 0.889$_{\pm 0.008}$ & 0.131$_{\pm 0.006}$ & 0.016$_{\pm 0.015}$ & 0.051$_{\pm 0.015}$ & 0.780$_{\pm 0.014}$ & 0.251$_{\pm 0.009}$ & 0.080$_{\pm 0.016}$ & 0.182$_{\pm 0.018}$ \\
FERM      
    & 0.860$_{\pm 0.056}$ & 0.142$_{\pm 0.070}$ & 0.013$_{\pm 0.003}$ & 0.035$_{\pm 0.013}$ & 0.829$_{\pm 0.013}$ & 0.237$_{\pm 0.042}$ & 0.028$_{\pm 0.021}$ & 0.130$_{\pm 0.044}$ \\
BLO              
    & 0.888$_{\pm 0.002}$ & 0.147$_{\pm 0.010}$ & 0.023$_{\pm 0.012}$ & 0.068$_{\pm 0.016}$ & 0.798$_{\pm 0.024}$ & 0.253$_{\pm 0.024}$ & 0.084$_{\pm 0.015}$ & 0.185$_{\pm 0.028}$ \\
BADS         
    & 0.884$_{\pm 0.004}$ & 0.140$_{\pm 0.025}$ & 0.014$_{\pm 0.012}$ & 0.056$_{\pm 0.023}$ & 0.834$_{\pm 0.018}$ & 0.217$_{\pm 0.019}$ & 0.031$_{\pm 0.017}$ & 0.122$_{\pm 0.025}$ \\
Fair-BADS-W  
    & \textbf{0.902$_{\pm 0.011}$} & \textbf{0.129$_{\pm 0.004}$} & \textbf{0.006$_{\pm 0.004}$} & 0.042$_{\pm 0.005}$ & \textbf{0.859$_{\pm 0.006}$} & \textbf{0.162$_{\pm 0.018}$} & \textbf{0.012$_{\pm 0.007}$} & \textbf{0.052$_{\pm 0.019}$} \\
Fair-BADS-M  
    & 0.901$_{\pm 0.006}$ & 0.133$_{\pm 0.015}$ & 0.010$_{\pm 0.004}$ & 0.034$_{\pm 0.007}$ & 0.850$_{\pm 0.006}$ & 0.186$_{\pm 0.032}$ & 0.024$_{\pm 0.012}$ & 0.090$_{\pm 0.041}$ \\
Fair-BADS-F  
    & 0.900$_{\pm 0.003}$ & 0.132$_{\pm 0.014}$ & 0.014$_{\pm 0.003}$ & \textbf{0.033$_{\pm 0.009}$} & \textbf{0.859$_{\pm 0.018}$} & 0.189$_{\pm 0.013}$ & 0.014$_{\pm 0.007}$ & 0.079$_{\pm 0.021}$ \\
\midrule
& \multicolumn{8}{c}{\textbf{FairFace}} \\
\midrule
ERM              
    & 0.716$_{\pm 0.014}$ & 0.170$_{\pm 0.038}$ & 0.045$_{\pm 0.016}$ & 0.198$_{\pm 0.022}$ & 0.656$_{\pm 0.007}$ & 0.392$_{\pm 0.020}$ & 0.030$_{\pm 0.003}$ & 0.402$_{\pm 0.022}$  \\
FairBatch        
     & 0.685$_{\pm 0.011}$ & 0.139$_{\pm 0.016}$ & 0.044$_{\pm 0.015}$ & 0.168$_{\pm 0.003}$ & 0.629$_{\pm 0.003}$ & 0.328$_{\pm 0.019}$ & 0.044$_{\pm 0.018}$ & 0.357$_{\pm 0.036}$ \\
FERM         
    & 0.699$_{\pm 0.006}$ & 0.416$_{\pm 0.002}$ & \textbf{0.026$_{\pm 0.013}$} & 0.156$_{\pm 0.032}$ & 0.628$_{\pm 0.006}$ & 0.416$_{\pm 0.002}$ & 0.026$_{\pm 0.016}$ & 0.422$_{\pm 0.017}$ \\
BLO         
    & 0.680$_{\pm 0.005}$ & 0.128$_{\pm 0.002}$ & 0.048$_{\pm 0.020}$ & 0.162$_{\pm 0.021}$ & 0.618$_{\pm 0.002}$ & \textbf{0.320$_{\pm 0.014}$} & 0.054$_{\pm 0.008}$ & \textbf{0.335$_{\pm 0.007}$} \\
BADS        
    & 0.661$_{\pm 0.011}$ & 0.159$_{\pm 0.023}$ & 0.055$_{\pm 0.010}$ & 0.203$_{\pm 0.031}$ & 0.632$_{\pm 0.012}$ & 0.369$_{\pm 0.096}$ & 0.047$_{\pm 0.008}$ & 0.400$_{\pm 0.097}$ \\
Fair-BADS-W  
    & 0.718$_{\pm 0.009}$ & 0.140$_{\pm 0.033}$ & 0.038$_{\pm 0.011}$ & 0.165$_{\pm 0.021}$ & 0.662$_{\pm 0.009}$ & 0.341$_{\pm 0.038}$ & 0.026$_{\pm 0.001}$ & 0.350$_{\pm 0.038}$ \\
Fair-BADS-M  
    & \textbf{0.719$_{\pm 0.008}$} & 0.141$_{\pm 0.031}$ & 0.040$_{\pm 0.011}$ & 0.168$_{\pm 0.022}$ & 0.660$_{\pm 0.009}$ & 0.342$_{\pm 0.036}$ & 0.027$_{\pm 0.008}$ &  0.353$_{\pm 0.028}$ \\
Fair-BADS-F  
    & \textbf{0.719$_{\pm 0.008}$} & \textbf{0.126$_{\pm 0.035}$} & 0.045$_{\pm 0.012}$ & \textbf{0.156$_{\pm 0.008}$} & \textbf{0.663$_{\pm 0.008}$} & 0.327$_{\pm 0.042}$ & \textbf{0.025$_{\pm 0.005}$} & \textbf{0.335$_{\pm 0.045}$} \\
\bottomrule
\end{tabular}}
\caption{Evaluation results under different bias amount. For Fair-BADS, we report the results using three different variants.}
\label{tab:compare_results_main_table}
\end{table*}

\section{Experiments}
 In the following sections, we first outline the experimental setup, then compare our method with related work across diverse image classification tasks under varying levels of label bias, followed by ablation studies on our selection strategy.

\subsection{Experimental Setup}
For each dataset, we simulate label bias using group-dependent corruption strategies~\citep{unlocking_fairness}. Unless otherwise specified, we use 20 particles per group and set the weight prior strength to \( \beta = 0.005 \). We use the JS divergence as our choice of \( f \)-divergence. The kernel \( k \) in SVGD and \( \Tilde{k} \) in MMD and \( f \)-divergence are both Gaussian kernels with adaptive bandwidth \( h = 0.1 \), numerical stability constant \( \epsilon = 1\text{e-}3 \). The heuristic kernel may degrade in high-dimensional spaces, where norm-regularized~\citep{after-rb-1} or PDE-based kernels~\citep{after-rb-2} provide more robust alternatives.

\paragraph{Datasets.} We evaluate our method on three image datasets: \textbf{UTKFace}~\citep{utkface}, Labeled Faces in the Wild with Attributes (\textbf{LFW-A})~\citep{lfwa_usage,lfw-a}, and \textbf{FairFace}~\citep{karkkainenfairface}. In UTKFace, race is used as the sensitive attribute and gender as the prediction target. For LFW-A, we predict gender and treat ``HeavyMakeup" as the sensitive attribute due to its observed correlation with gender bias. In FairFace, we perform binary gender classification using race as the sensitive variable, grouping individuals as ``White" or ``Black" to evaluate fairness.

\begin{table*}[ht]
    \centering
    \resizebox{1.0\textwidth}{!}{
    \begin{tabular}{c | c c c c| c c c c}
    \toprule
     \multirow{2}{*}{\textbf{Method}}  & \multicolumn{4}{c|}{\textbf{LFW-A (bias amount: 0.2)}} & \multicolumn{4}{c}{\textbf{LFW-A (bias amount: 0.4)}}\\
     \cmidrule(lr){2-5} \cmidrule(lr){6-9}
     &  \textbf{ACC}($\uparrow$) & \textbf{DP}($\downarrow$) & \textbf{DDP}($\downarrow$) & \textbf{EO}($\downarrow$) & \textbf{ACC}($\uparrow$) & \textbf{DP}($\downarrow$) & \textbf{DDP}($\downarrow$) & \textbf{EO}($\downarrow$) \\
    \midrule
    Fair-BADS-W & \textbf{0.891}$_{\pm 0.018}$ & 0.144$_{\pm 0.023}$ & 0.014$_{\pm 0.008}$ & 0.049$_{\pm 0.029}$ & 0.847$_{\pm 0.016}$ & 0.192$_{\pm 0.044}$ & 0.034$_{\pm 0.008}$ & 0.095$_{\pm 0.053}$  \\
    Fair-BADS-M & 0.890$_{\pm 0.017}$ & 0.142$_{\pm 0.026}$ & \textbf{0.010}$_{\pm 0.005}$ & 0.046$_{\pm 0.028}$ & 0.846$_{\pm 0.005}$ & 
    \textbf{0.187}$_{\pm 0.054}$ & \textbf{0.029}$_{\pm 0.015}$ & \textbf{0.090}$_{\pm 0.057}$ \\
    Fair-BADS-F & 0.889$_{\pm 0.020}$ & \textbf{0.133}$_{\pm 0.025}$ & 0.017$_{\pm 0.004}$ & \textbf{0.039}$_{\pm 0.031}$
    & \textbf{0.850}$_{\pm 0.020}$ & 0.198$_{\pm 0.040}$ & \textbf{0.029}$_{\pm 0.015}$ & 0.093$_{\pm 0.049}$\\
    \bottomrule
    \end{tabular}
    }
    \caption{Evaluation results on LFW-A, with CLIP-RN50 used as a zero-shot predictor for meta loss approximation.}
    \label{tab:variant_zero_shot_predictor}
\end{table*}

\begin{table}[ht]
\centering
\resizebox{\columnwidth}{!}{
\begin{tabular}{l|cccc}
\toprule
\textbf{Method} & \textbf{ACC} ($\uparrow$) & \textbf{DP} ($\downarrow$) & \textbf{DDP} ($\downarrow$) & \textbf{EO} ($\downarrow$) \\
\midrule
& \multicolumn{4}{c}{Backbone: ResNet-18}\\
\midrule
Fair-BADS-W 
    & 0.820$_{\pm 0.015}$ & \textbf{0.049}$_{\pm 0.017}$ & 0.031$_{\pm 0.016}$ & 0.002$_{\pm 0.001}$ \\
Fair-BADS-M 
    & 0.819$_{\pm 0.007}$ & 0.051$_{\pm 0.007}$ & \textbf{0.030$_{\pm 0.007}$} & \textbf{0.001}$_{\pm 0.001}$ \\
Fair-BADS-F 
    & \textbf{0.821$_{\pm 0.016}$} & 0.051$_{\pm 0.018}$ & \textbf{0.030$_{\pm 0.017}$} & 0.002$_{\pm 0.001}$ \\
\midrule    
& \multicolumn{4}{c}{Backbone: DenseNet-121}\\
\midrule
Fair-BADS-W 
    & 0.841$_{\pm 0.009}$ & 0.080$_{\pm 0.002}$ & \textbf{0.004}$_{\pm 0.002}$ & \textbf{0.001}$_{\pm 0.001}$ \\
Fair-BADS-M 
    & \textbf{0.845$_{\pm 0.011}$} & 0.081$_{\pm 0.003}$ & 0.008$_{\pm 0.004}$ & 0.003$_{\pm 0.002}$ \\
Fair-BADS-F 
    & 0.837$_{\pm 0.008}$ & \textbf{0.076$_{\pm 0.006}$} & 0.007$_{\pm 0.011}$ & 0.002$_{\pm 0.001}$ \\
\midrule
& \multicolumn{4}{c}{Backbone: ViT-B/16}\\
\midrule
Fair-BADS-W 
    & 0.867$_{\pm 0.014}$ & 0.160$_{\pm 0.005}$ & 0.021$_{\pm 0.006}$ & 0.079$_{\pm 0.005}$ \\
Fair-BADS-M 
    & \textbf{0.874}$_{\pm 0.011}$ & \textbf{0.156$_{\pm 0.007}$} & \textbf{0.018$_{\pm 0.006}$} & \textbf{0.073$_{\pm 0.007}$} \\
Fair-BADS-F 
    & 0.866$_{\pm 0.016}$ & 0.160$_{\pm 0.005}$ & 0.021$_{\pm 0.008}$ & 0.079$_{\pm 0.007}$\\

\bottomrule
\end{tabular}}
\caption{Comparison of Fair-BADS variants across backbones under bias level 0.4.}
\label{tab:distance_backbone_full}
\end{table}

\paragraph{Baselines and Metrics.}
We evaluate our proposed method against several representative baselines, including standard empirical risk minimization (\textbf{ERM}), a sampling-based approach (\textbf{FairBatch}~\citep{Roh0WS21_fairbatch}), an in-processing fairness method using $f$-divergence (\textbf{FERM}~\citep{baharlouei2024fferm}), a reweighting-based data selection method (\textbf{BLO}) and a Bayesian data selection method (\textbf{BADS}~\citep{xu2024a}). For our Fair-BADS, we implement three variants based on different discrepancy measures: Wasserstein distance (Fair-BADS-W), MMD (Fair-BADS-M) and $f$-divergence (Fair-BADS-F). All methods are evaluated under a consistent experimental setup, using both accuracy and fairness metrics (Demographic Parity (\textbf{DP}), Difference in Demographic Parity (\textbf{DDP}) and Equal Opportunity (\textbf{EO})). Each experiment is conducted with three runs, and we report the mean ${\pm}$ standard deviation.

\subsection{Comparison Results}
\cref{tab:compare_results_main_table} summarizes the results across UTKFace, LFW-A and FairFace under varying levels of label bias. On both UTKFace and LFW-A, our Fair-BADS variants consistently achieve the best and competitive accuracy while reducing fairness disparities compared to other baselines. In particular, Fair-BADS-W yields the best overall trade-off, outperforming the original BADS in both accuracy and fairness metrics. Fair-BADS-W shows more stable improvements, though MMD and $f$-divergence variants also perform well. ERM and BLO tend to suffer from increasing fairness gaps under higher bias, while FairBatch and FERM reduce disparities at the cost of performance. Though BLO and BADS are originally designed to handle low quality data via data selection, they do not explicitly address fairness and thus inadvertently reinforce bias by prioritizing samples from the majority group. To validate these improvements, we conduct paired t-tests and find that on UTKFace, Fair-BADS-W significantly outperforms the next-best method (BADS) in both accuracy and fairness across all bias levels ($p < 0.001$). On LFW-A, it also shows significant gains, especially under high bias ($p < 0.01$). On FairFace, Fair-BADS-F and Fair-BADS-M consistently outperform the next-best method, with significant improvements in accuracy and DP ($p < 0.05$).

\subsection{Learning without Meta Dataset}

In scenarios where no explicit meta dataset $\mathcal{D}_m$ is available, we approximate the meta objective using a zero-shot predictor $f^\ast(\mathbf{x})$ trained on external data (e.g., CLIP-RN50). Instead of directly evaluating $p(\mathcal{D}_m \mid \boldsymbol\theta)$, we estimate it as:
\begin{equation}
\log p(\mathcal{D}_m \mid \boldsymbol\theta) \approx - \mathrm{KL}\big[ p_{\boldsymbol\theta}(\mathbf{y} \mid \mathbf{x}) || p(\mathbf{y} \mid f^\ast(\mathbf{x})) \big],
\end{equation}
where $p_{\boldsymbol\theta}(\mathbf{y} \mid \mathbf{x})$ is the model’s output distribution and $p(\mathbf{y} \mid f^\ast(\mathbf{x}))$ is the pseudo label distribution induced by the zero-shot predictor. This KL divergence acts as a surrogate meta loss, allows us to avoid explicit collection of a meta set. To compute it, we reserve a small fraction (1\%) of the training data as $\mathcal{D}_m^{\text{pseudo}}$, which is excluded from training loss throughout training. As shown in \cref{tab:variant_zero_shot_predictor}, even under the situation when meta set is not available, our method still outperforms all baseline approaches in both accuracy and fairness metrics, despite showing slightly lower performance compared to the performance use explicit meta set. This highlights the framework’s practical advantage in settings where collecting a clean meta set is infeasible.

\subsection{Ablation Studies}
To assess architectural impact, we compare three backbones: ResNet-18~\citep{DBLP:journals/corr/HeZRS15}, DenseNet-121~\citep{huang2017densely}, and ViT-B/16~\citep{dosovitskiy2021an}. 
As shown in \cref{tab:distance_backbone_full}, ViT-B/16 yields the best accuracy, while all variants maintain low fairness metrics. This confirms that our method generalizes well across architectures and that MMD and Wasserstein distances provide more stable fairness control than $f$-divergence.

% As shown in \cref{tab:distance_backbone_full}, ViT-B/16 achieves the best accuracy, followed by DenseNet-121 and ResNet-18. Importantly, all variants maintain low fairness metrics, demonstrating the robustness of our method. This highlights that (1) the framework adapts well to different architectures, and (2) MMD and Wasserstein distances offer more stable fairness control than $f$-divergence.

Beyond architectural variations, we also examine how fairness emerges throughout training. In \cref{fig:weight_epochs}, the KDE plot (left) shows near-identical sample weight distributions across groups by the final epoch, indicating unbiased data selection. The Wasserstein distance (right) decreases during training, confirming that group posteriors align progressively. This supports the effectiveness of barycenter-based alignment in improving fairness.

\begin{figure}
    \centering
    \includegraphics[width=0.95\linewidth]{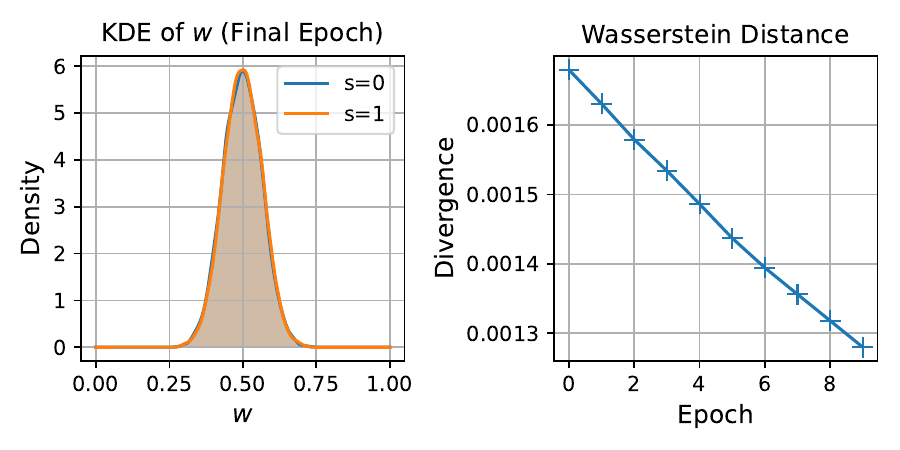}
    \caption{Comparison of sample weight distributions across demographic groups. Left: KDE of sample weights $\mathbf{w}$ at the final training epoch for groups $s=0$ and $s=1$. Right: Wasserstein distance between group-specific weight distributions over training epochs.}
    \label{fig:weight_epochs}
\end{figure}

\section{Conclusions}
We propose \textit{Fair-BADS}, a framework that addresses fairness by combining group-specific inference with distributional alignment. Unlike prior methods that overlook group disparities, we model group-specific posteriors and align them via a shared central distribution, acting as a soft regularizer in SVGD. This approach ensures inter-group consistency without adversarial training or hard constraints. Our particle-based inference is scalable and naturally promotes distributional fairness. Experiments demonstrate improved fairness with strong task performance. Future directions include extending to continuous attributes, dynamic group discovery, and complex tasks like language generation.

\section{Acknowledgments}
% We sincerely thank the reviewers for their careful reading of the manuscript and their valuable suggestions. 
This work was supported by the NSFC Projects (No. 62506069, No. 62576346), the MOE Project of Key Research Institute of Humanities and Social Sciences (22JJD110001), and Beijing Advanced Innovation Center for Future Blockchain and Privacy Computing. We also thank Shanghai Institute for Mathematics and Interdisciplinary Sciences (SIMIS) for their financial support under grant number SIMIS-ID-2025-EM.

\bibliography{aaai2026}

\clearpage

\appendix
% \onecolumn

\section{Theory: Full Assumptions, Lemmas, and Proofs}
\label{app:theory_full}

This appendix provides all technical details that underlie Sec.~\ref{subsec:theory}. Throughout we write $z=(\boldsymbol\theta,\mathbf w)$, $p_s$ for the group posterior, $p^\star$ for the barycenter, and $D$ for the chosen discrepancy. We separate the analysis for $D\in\{W_2,\mathrm{MMD}_k,D_f\}$ and then present unified statements.

\subsection{Assumptions}
\label{app:assumpandpadding}

\begin{assumption}[Loss regularity w.r.t.  $D$]
	\label{assump:loss-regularity}
	For each $s\in S$, the group loss $\mathcal{L}_s:\Xi\to[0,\infty)$ is measurable and satisfies, for any distributions $p,q$ on $\Xi$,
	\[
	\Big|\mathbb{E}_{p}\mathcal{L}_s(z)-\mathbb{E}_{q}\mathcal{L}_s(z)\Big|\le C_s  D(p,q),
	\]
	with the following instantiations:
	\vspace{1mm}
	\begin{itemize}
		\item \textbf{Wasserstein $W_2$:} $\mathcal{L}_s$ is $L_s$–Lipschitz under the metric $d(\cdot,\cdot)$ inducing $W_2$; then $C_s=L_s$ by Kantorovich–Rubinstein type arguments one can prove $W_1$ then adapting to $W_2$.
		\item \textbf{MMD:} $\mathcal{L}_s\in\mathcal{H}_k$ with $\|\mathcal{L}_s\|_{\mathcal{H}_k}\le C_s$; then $C_s=\|\mathcal{L}_s\|_{\mathcal{H}_k}$ and the MMD duality yields the bound.
		\item \textbf{$f$‑divergence:} $\mathcal{L}_s\in[0,B_s]$; by Pinsker‑type inequalities, $|\mathbb{E}_p f - \mathbb{E}_q f|\le B_s \sqrt{2c_f D_f(p\|q)}$ where $c_f$ depends on the chosen $f$ (e.g., $c_f{=}1$ for KL  divergence, $c_f = 2$ for JS divergence  and $c_f = 1/(1+t)$ for $\chi^2$ divergence). 
	\end{itemize}
\end{assumption}

\begin{remark}
	The constants $C_s$ make explicit how the divergence choice affects tightness: $W_2$ gives linear (but potentially expensive) bounds, MMD/IPMs offer linear bounds and are kernel‑amenable, while $f$‑divergences yield $\sqrt{\cdot}$–type bounds.
\end{remark}
\begin{assumption}[Cross–group compatibility]
	\label{assump:cross-group}
	There exists $K<\infty$ such that $|\mathcal{L}_s(z)-\mathcal{L}_{s'}(z)|\le K$ for all $z$ and $s,s'\in S$.
\end{assumption}

\paragraph{Padding for unequal $N_s$.}
Let $\mathcal{P}_s:\mathbb{R}^{P+N_s}\to\mathbb{R}^{P+\bar N}$ (with $\bar N = \max_s N_s$) be the zero‑padding operator used in the main text. We use the following fact.

\begin{proposition}[Divergence preservation under padding]
	\label{prop:padding}
	Let $p_s,q_s$ be distributions on $\mathbb{R}^{P+N_s}$, and $\bar p_s=(\mathcal{P}_s)_\# p_s$, $\bar q_s=(\mathcal{P}_s)_\# q_s$ their pushforwards. Then:
	\begin{itemize}
		\item $W_2(\bar p_s,\bar q_s)=W_2(p_s,q_s)$;
		\item $\mathrm{MMD}_k(\bar p_s,\bar q_s)=\mathrm{MMD}_k(p_s,q_s)$ for translation–invariant kernels;
		\item $D_f(\bar p_s\|\bar q_s)=D_f(p_s\|q_s)$ for any $f$–divergence.
	\end{itemize}
\end{proposition}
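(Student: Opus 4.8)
Everything rests on one structural fact: the zero-padding map $\mathcal{P}_s$ is a \emph{linear isometric embedding} of $\R^{P+N_s}$ onto the linear subspace $V_s \coloneqq \R^{P+N_s}\times\{0\}^{\bar N - N_s}\subseteq\R^{P+\bar N}$. Concretely, $\|\mathcal{P}_s u-\mathcal{P}_s v\|_2=\|u-v\|_2$ for all $u,v$; $\mathcal{P}_s$ is injective; and both $\mathcal{P}_s$ and its inverse restricted to $V_s$ are continuous, hence Borel. Consequently $\bar p_s$ and $\bar q_s$ are supported on $V_s$. I would prove each of the three identities by transporting the appropriate variational or integral representation along $\mathcal{P}_s$. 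For $W_2$, set $\Phi\coloneqq\mathcal{P}_s\times\mathcal{P}_s$. Any coupling $\pi\in\Pi(p_s,q_s)$ gives $\Phi_\#\pi\in\Pi(\bar p_s,\bar q_s)$; conversely, since both marginals of any $\bar\pi\in\Pi(\bar p_s,\bar q_s)$ are concentrated on $V_s$, $\bar\pi$ is concentrated on $V_s\times V_s$ and hence equals $\Phi_\#\pi$ for a unique $\pi\in\Pi(p_s,q_s)$. Isometry gives $\int\|x-y\|_2^2\,d(\Phi_\#\pi)=\int\|\mathcal{P}_s u-\mathcal{P}_s v\|_2^2\,d\pi=\int\|u-v\|_2^2\,d\pi$, so taking the infimum over couplings on both sides yields $W_2^2(\bar p_s,\bar q_s)=W_2^2(p_s,q_s)$, and hence $W_2(\bar p_s,\bar q_s)=W_2(p_s,q_s)$.

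For MMD, expand $\mathrm{MMD}_k^2(\bar p_s,\bar q_s)=\E_{x,x'\sim\bar p_s}k(x,x')+\E_{y,y'\sim\bar q_s}k(y,y')-2\,\E_{x\sim\bar p_s,y\sim\bar q_s}k(x,y)$. For a radial kernel $k(x,y)=\phi(\|x-y\|_2)$ — which covers the Gaussian kernel used throughout the paper, and more generally any translation-invariant kernel whose restriction to $V_s-V_s$ coincides with the kernel used on $\R^{P+N_s}$ (the natural compatibility condition) — isometry of $\mathcal{P}_s$ gives, term by term, $\E_{x,x'\sim\bar p_s}k(x,x')=\E_{u,u'\sim p_s}\phi(\|\mathcal{P}_s u-\mathcal{P}_s u'\|_2)=\E_{u,u'\sim p_s}\phi(\|u-u'\|_2)$, and similarly for the cross term and the $\bar q_s$ term. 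Summing the three preserved terms gives $\mathrm{MMD}_k^2(\bar p_s,\bar q_s)=\mathrm{MMD}_k^2(p_s,q_s)$.

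For the $f$-divergence, I would invoke invariance of $f$-divergences under injective bi-measurable maps. If $p_s\not\ll q_s$ then $\bar p_s\not\ll\bar q_s$ and both sides are $+\infty$; otherwise $\bar p_s\ll\bar q_s$, and by injectivity of $\mathcal{P}_s$ the Radon–Nikodym derivative satisfies $\frac{d\bar p_s}{d\bar q_s}(\mathcal{P}_s u)=\frac{dp_s}{dq_s}(u)$ for $q_s$-a.e.\ $u$. Changing variables, $D_f(\bar p_s\|\bar q_s)=\int f\!\big(\tfrac{d\bar p_s}{d\bar q_s}\big)\,d\bar q_s=\int f\!\big(\tfrac{dp_s}{dq_s}(u)\big)\,dq_s(u)=D_f(p_s\|q_s)$. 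Equivalently, the data-processing inequality applied to the deterministic channel $\mathcal{P}_s$ gives $D_f(\bar p_s\|\bar q_s)\le D_f(p_s\|q_s)$, and applying it to any measurable left inverse of $\mathcal{P}_s$ on $V_s$ (which exists since $\mathcal{P}_s$ is a Borel isomorphism onto $V_s$) gives the reverse inequality.

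\textbf{Main obstacle.} The only genuinely delicate point is the $f$-divergence case: one must not treat $\bar p_s$ and $\bar q_s$ as having Lebesgue densities on $\R^{P+\bar N}$ — their mass lies on the Lebesgue-null subspace $V_s$ — so the density-ratio formula stated in the main text has to be read as a Radon–Nikodym derivative with respect to $\bar q_s$, or, equivalently, the argument must be phrased through two-sided data processing. The $W_2$ and MMD parts are routine once the isometry of $\mathcal{P}_s$ and the observation that the pushforwards (and all their couplings) concentrate on $V_s$ are in place.
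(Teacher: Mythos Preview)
Your proposal is correct and follows essentially the same approach as the paper's proof sketch: isometry of the padding map for $W_2$ and MMD, and preservation of the density ratio for $f$-divergences. Your treatment of the $f$-divergence case via Radon--Nikodym derivatives with respect to $\bar q_s$ (and the two-sided data-processing argument) is in fact more careful than the paper's own sketch, which writes ambient-space ``densities'' that do not literally exist since the pushforwards live on a Lebesgue-null subspace --- exactly the issue you flag in your main obstacle.
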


\begin{proof}[Proof Sketch for Proposition~\ref{prop:padding}]
	For Wasserstein distance, the optimal transport plan between padded distributions can be constructed from the optimal plan between original distributions, preserving the transport cost due to the isometry property.
	
	For MMD with translation-invariant kernels, the kernel evaluations depend only on distances, which are preserved under padding.
	
	For f-divergences, recall that $D_f(p \| q) = \int f\left(\frac{dp}{dq}\right) dq$ when $p \ll q$. The padded distributions have densities that factorize as:
	\begin{equation}
		\bar{p}_s(z) = p_s(\mathcal{Q}_s(z)) \cdot \mathbf{1}_{\text{supp}(\mathcal{P}_s)}(z)
	\end{equation}
	where the indicator function ensures the measure is supported on the padded subspace. The density ratio is preserved:
	\begin{equation}
		\frac{d\bar{p}_s}{d\bar{q}_s}(z) = \frac{p_s(\mathcal{Q}_s(z))}{q_s(\mathcal{Q}_s(z))} = \frac{dp_s}{dq_s}(\mathcal{Q}_s(z))
	\end{equation}
	on the support of $\mathcal{P}_s$. Therefore:
	\begin{align}
		D_f(\bar{p}_s \| \bar{q}_s) &= \int_{\text{supp}(\mathcal{P}_s)} f\left(\frac{d\bar{p}_s}{d\bar{q}_s}\right) d\bar{q}_s\\
		&= \int_{\mathbb{R}^{P + N_s}} f\left(\frac{dp_s}{dq_s}\right) dq_s = D_f(p_s \| q_s)
	\end{align}
	
	Examples of preserved f-divergences include:
	\begin{itemize}
		\item KL divergence: $f(t) = t\log t$
		\item JS divergence: $f(t) = t\log t - (t+1)\log\frac{t+1}{2}$
		\item $\chi^2$ divergence: $f(t) = (t-1)^2$
		\item $\alpha$-divergence: $f(t) = \frac{t^\alpha - \alpha t + \alpha - 1}{\alpha(\alpha-1)}$
	\end{itemize}
\end{proof}

Hence all our bounds proved in the common (padded) space numerically equal their counterparts in each group’s native space.

\paragraph{Assumptions.}
We use the following discrepancy-specific regularity.
\begin{enumerate}
\item[(A1)] (\textbf{Loss regularity}) For each $s$, the loss $\mathcal{L}_s$ is bounded and satisfies a $D$–Lipschitz–type condition: for all distributions $p,q$,
\[
\big|\mathbb{E}_{p}\mathcal{L}_s(z)-\mathbb{E}_{q}\mathcal{L}_s(z)\big|\le C_s  D(p,q),
\]
where $C_s$ depends on the choice of $D$:
\begin{itemize}
	\item $C_s=L_s$ if $D=W_2$ and $\mathcal{L}_s$ is $L_s$–Lipschitz;
	\item $C_s=\|\mathcal{L}_s\|_{\mathcal{H}_k}$ if $D=\mathrm{MMD}_k$ and $\mathcal{L}_s\in\mathcal{H}_k$;
	\item $C_s=B_s\sqrt{2c_f}$ if $D=D_f$ and $\mathcal{L}_s\in[0,B_s]$ (Pinsker‑type).
\end{itemize}
\item[(A2)] (\textbf{Cross–group compatibility}) There exists $K<\infty$ such that for any $z$ and $s,s'\in S$,
$|\mathcal{L}_s(z)-\mathcal{L}_{s'}(z)|\le K$.
\end{enumerate}

\subsection{Discrepancy Transfer Bound (Theorem~\ref{thm:transfer})}
\label{app:proof-transfer}

\begin{proof}
	Let $\bar R=\sum_{s}\lambda_s R_s(\tilde p_s)$. Then
	\[
	R(\tilde p^\star)-\bar R 
	= \sum_{s}\lambda_s \Big(\mathbb{E}_{\tilde p^\star}\mathcal{L}_s(z)-\mathbb{E}_{\tilde p_s}\mathcal{L}_s(z)\Big).
	\]
	By (A1) for each $s$,
	\[
	\big|\mathbb{E}_{\tilde p^\star}\mathcal{L}_s(z)-\mathbb{E}_{\tilde p_s}\mathcal{L}_s(z)\big|
	\le C_s  D(\tilde p^\star,\tilde p_s),
	\]
	and the claimed bound \eqref{eq:bound1} follows by convexity of the absolute value and the triangle inequality.

	The three concrete instantiations are immediate and routine from the three cases of Assumption~\ref{assump:loss-regularity}. We present the proof for completeness.

		\textbf{Case A (Wasserstein):}
		Define $\bar{R} = \sum_{s \in S} \lambda_s R(p_s)$ as the weighted average risk. Then:
		\begin{align}
			R(\tilde p^\star) - \bar{R} &= R(\tilde p^\star) - \sum_{s \in S} \lambda_s R_s(\tilde p_s)\\
			&= \sum_{s \in S} \lambda_s [R_s(\tilde p^\star) - R_s(\tilde p_s)]\\
			&= \sum_{s \in S} \lambda_s \left[\mathbb{E}_{z \sim \tilde p^\star}[\mathcal{L}_s(z)] - \mathbb{E}_{z \sim \tilde p_s}[\mathcal{L}_s(z)]\right]
		\end{align}
		
		Since $\mathcal{L}_s$ is $L_s$-Lipschitz, by the Kantorovich-Rubinstein duality:
		\begin{equation}
			\left|\mathbb{E}_{\tilde p^\star}[\mathcal{L}_s(z)] - \mathbb{E}_{\tilde p_s}[\mathcal{L}_s(z)]\right| \leq L_s \cdot W_2(\tilde p^\star, \tilde p_s)
		\end{equation}
		
		Therefore:
		\begin{align}
			\left|R(\tilde p^\star) - \bar{R}\right| &\leq \sum_{s \in S} \lambda_s \left|\mathbb{E}_{\tilde p^\star}[\mathcal{L}_s(z)] - \mathbb{E}_{\tilde p_s}[\mathcal{L}_s(z)]\right|\\
			&\leq \sum_{s \in S} \lambda_s L_s \cdot W_2(\tilde p^\star, \tilde p_s)
		\end{align}
		
		\textbf{Case B (MMD):}
		For $\mathcal{L} \in \mathcal{H}_k$, using the reproducing property:
		\begin{align}
			\left|\mathbb{E}_{\tilde p^\star}[\mathcal{L}_s(z)] - \mathbb{E}_{\tilde p_s}[\mathcal{L}_s(z)]\right| &= \left|\langle \mathcal{L}_s, \mu_{\tilde p^\star} - \mu_{\tilde p_s} \rangle_{\mathcal{H}_k}\right|\\
			&\leq \|\mathcal{L}_s\|_{\mathcal{H}_k} \cdot \|\mu_{\tilde p^\star} - \mu_{\tilde p_s}\|_{\mathcal{H}_k}\\
			&= \|\mathcal{L}_s\|_{\mathcal{H}_k} \cdot \text{MMD}_k(\tilde p^\star, \tilde p_s)
		\end{align}
		
		\textbf{Case C (f-divergence):}
		Using Pinsker's inequality for KL divergence (similar bounds exist for other f-divergences):
		\begin{align}
			\text{TV}(p_s, p^\star) \leq \sqrt{\frac{1}{2}D_{KL}(\tilde p_s \| \tilde p^\star)}
		\end{align}
		
		For bounded $\mathcal{L}_s$:
		\begin{align}
			\left|\mathbb{E}_{\tilde p^\star}[\mathcal{L}_s(z)] - \mathbb{E}_{\tilde p_s}[\mathcal{L}_s(z)]\right| &\leq 2B_s \cdot \text{TV}(\tilde p_s, p^\star)\\
			&\leq B_s\sqrt{2 D_{KL}(\tilde p_s \| \tilde p^\star)}
		\end{align}

\end{proof}

\subsection{Group Disparity Bound (Theorem~\ref{thm:disparity-vanish})}
\label{app:proof-disparity}

\begin{proof}
	For any $s,s'$,
	\begin{align*}
		&\mathbb{E}_{\tilde p_s}\mathcal{L}_s(z)-\mathbb{E}_{\tilde p_{s'}}\mathcal{L}_{s'}(z) \\
		=& \big(\mathbb{E}_{\tilde p_s}\mathcal{L}_s(z)-\mathbb{E}_{\tilde p^\star}\mathcal{L}_s(z)\big)
		+ \big(\mathbb{E}_{\tilde p^\star}\mathcal{L}_s(z)-\mathbb{E}_{\tilde p^\star}\mathcal{L}_{s'}(z)\big)
		\\&+ \big(\mathbb{E}_{\tilde p^\star}\mathcal{L}_{s'}(z)-\mathbb{E}_{\tilde p_{s'}}\mathcal{L}_{s'}(z)\big).
	\end{align*}
	The first and third terms are bounded via (A1) by $C_s D(\tilde p_s,\tilde p^\star)$ and $C_{s'} D(\tilde p_{s'},\tilde p^\star)$. The middle term is bounded by $K$ using (A2). Taking absolute values and further maximizing the middle term over $s,s'$ and the sample path of $\tilde p^\star$ yields \eqref{eq:disparity-time}.
\end{proof}

Now we formalize the condition for the elimination of  $K_{\mathrm{eff}}(t)$ in \eqref{eq:disparity-time}.

In order to eliminate $K_{\mathrm{eff}}(t)$,  we adopt the following \emph{strong} assumption: the fairness-aware barycenter converges to a \emph{single} parameter $z^\dagger$ that equalizes all group losses. This lets us turn the qualitative statement in Theorem~\ref{thm:disparity-vanish} into an \emph{asymptotically vanishing} bound with explicit rates that depend on the chosen discrepancy $D$.

\begin{assumption}[Point--mass strong limit]
	\label{assump:point-mass}
	There exists $z^\dagger \in \Xi$ such that
	\begin{equation}
		\label{eq:equalizer-point}
		\mathcal{L}_s(z^\dagger) = \mathcal{L}_{s'}(z^\dagger) \qquad \forall  s,s' \in S,
	\end{equation}
	and the (empirical) barycenters produced by Fair-BADS satisfy
	\[
	\tilde p_\star^{(t)} \xrightarrow[D]{} \delta_{z^\dagger}
	\quad \text{and} \quad
	D(\tilde p_s^{(t)}, \tilde p_\star^{(t)}) \to 0 \quad \text{for all } s \in S,
	\]
	as $t \to \infty$.
\end{assumption}

We now prove that, under Assumption~\ref{assump:point-mass}, the effective cross--group term in Theorem~\ref{thm:disparity-vanish} \emph{vanishes}. We present the result for the three divergences we consider (Wasserstein, MMD/IPM, and $f$-divergence). The Wasserstein case provides a \emph{sup}-type (support-level) bound; for MMD and $f$-divergences, we obtain clean \emph{expectation}-level bounds.\footnote{One can turn the expectation bounds for MMD/$f$-divergence into support-type statements under additional smoothness/uniform-continuity assumptions; we do not pursue these technicalities here.}

\paragraph{A convenient empirical-particle inequality (Wasserstein).}
When $\tilde p_\star^{(t)}$ is represented by $M$ equally weighted particles $\{z^{(m)}_\star\}_{m=1}^M$,
\begin{equation}
	\label{eq:max-dist-W2}
	\max_{1 \le m \le M} \|z^{(m)}_\star - z^\dagger\|
	 \le 
	\sqrt{M}  W_2\!\big(\tilde p_\star^{(t)}, \delta_{z^\dagger}\big),
\end{equation}
because $W_2^2 = \frac{1}{M} \sum_{m=1}^M \|z^{(m)}_\star - z^\dagger\|^2$ and hence $\max_m \|z^{(m)}_\star - z^\dagger\| \le \sqrt{M} \big(\frac{1}{M} \sum_m \|z^{(m)}_\star - z^\dagger\|^2 \big)^{1/2}$.

\begin{theorem}[Vanishing effective cross--group term]
	\label{thm:Keff-strong}
	Assume \textnormal{(A1)} and Assumption~\ref{assump:point-mass}. Define
	\[
	K_{\mathrm{eff}}(t) 
	:= \sup_{z \in \mathrm{supp}(\tilde p_\star^{(t)})} \max_{s,s' \in S}
	\big|\mathcal{L}_s(z)-\mathcal{L}_{s'}(z)\big|.
	\]
	Then $K_{\mathrm{eff}}(t) \to 0$ as $t \to \infty$. More precisely:
	
	\begin{enumerate}
		\item \textbf{(a) Wasserstein case.} 
		Suppose $D = W_2$ and each $\mathcal{L}_s$ is $L_s$-Lipschitz. Then for every $t$,
		\begin{equation}
			\label{eq:Keff-W2}
			K_{\mathrm{eff}}(t) 
			 \le 
			2 \max_{s \in S} L_s \sqrt{M}  W_2\!\big(\tilde p_\star^{(t)}, \delta_{z^\dagger}\big),
		\end{equation}
		and hence $K_{\mathrm{eff}}(t) \to 0$ as soon as $W_2(\tilde p_\star^{(t)}, \delta_{z^\dagger}) \to 0$.
		
		\item \textbf{(b) MMD/IPM case(expectation level).}
		Suppose $D = \mathrm{MMD}_k$, each $\mathcal{L}_s \in \mathcal{H}_k$ with $\|\mathcal{L}_s\|_{\mathcal{H}_k}\le C_s$. Then
		\begin{equation}
			\label{eq:exp-gap-MMD}
			\max_{s,s' \in S}
			\Big|\E_{\tilde p_\star^{(t)}}[\mathcal{L}_s] - \E_{\tilde p_\star^{(t)}}[\mathcal{L}_{s'}]\Big|
			 \le 
			2 C_{\max}   \mathrm{MMD}_k\!\big(\tilde p_\star^{(t)}, \delta_{z^\dagger}\big)
		\end{equation}
		where $C_{\max} = \max_{s \in S} C_s,$
		and thus the \emph{expected} cross-group gap vanishes as $\mathrm{MMD}_k(\tilde p_\star^{(t)}, \delta_{z^\dagger}) \to 0$.
		
		\item \textbf{(c) $f$-divergence case (expectation level).}
		Suppose $D = D_f$, each $\mathcal{L}_s \in [0,B_s]$, and let $c_f$ be the Pinsker-type constant for $D_f$ (e.g., $c_f=1$ for $\KL$). Then
		\begin{equation}
			\label{eq:exp-gap-fdiv}
			\max_{s,s' \in S}
			\Big|\E_{\tilde p_\star^{(t)}}[\mathcal{L}_s] - \E_{\tilde p_\star^{(t)}}[\mathcal{L}_{s'}]\Big|
			 \le 
			2 \max_{s \in S} B_s \sqrt{2 c_f   D_f\!\big(\tilde p_\star^{(t)}   \big\|   \delta_{z^\dagger}\big)},
		\end{equation}
		and hence the \emph{expected} cross-group gap vanishes whenever $D_f(\tilde p_\star^{(t)} \| \delta_{z^\dagger}) \to 0$.
	\end{enumerate}
\end{theorem}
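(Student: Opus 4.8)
The plan is to prove each of the three parts of Theorem~\ref{thm:Keff-strong} separately, since the structure mirrors the three cases of (A1), but first I would handle the overall claim $K_{\mathrm{eff}}(t)\to 0$ via the Wasserstein bound in part (a), because a support-level statement is needed there and the other two parts only give expectation-level control. For part (a), the key observation is the empirical-particle inequality \eqref{eq:max-dist-W2}, which the excerpt has already established: $\max_m\|z^{(m)}_\star-z^\dagger\|\le\sqrt{M}\,W_2(\tilde p_\star^{(t)},\delta_{z^\dagger})$. Then for any $z\in\mathrm{supp}(\tilde p_\star^{(t)})$, i.e.\ any particle $z^{(m)}_\star$, and any pair $s,s'$, I would write
\[
|\mathcal{L}_s(z)-\mathcal{L}_{s'}(z)|
\le |\mathcal{L}_s(z)-\mathcal{L}_s(z^\dagger)| + |\mathcal{L}_s(z^\dagger)-\mathcal{L}_{s'}(z^\dagger)| + |\mathcal{L}_{s'}(z^\dagger)-\mathcal{L}_{s'}(z)|,
\]
where the middle term vanishes by the equalizer property \eqref{eq:equalizer-point}, and the two outer terms are each at most $\max_s L_s\,\|z-z^\dagger\|$ by Lipschitzness. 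Taking the supremum over the (finite) particle support and applying \eqref{eq:max-dist-W2} gives the factor-$2$ bound \eqref{eq:Keff-W2}; since the right-hand side $\to 0$ under Assumption~\ref{assump:point-mass}, so does $K_{\mathrm{eff}}(t)$.

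For parts (b) and (c), the target is the expected cross-group gap $|\E_{\tilde p_\star^{(t)}}\mathcal{L}_s-\E_{\tilde p_\star^{(t)}}\mathcal{L}_{s'}|$ rather than a pointwise sup, so the trick is to insert $\delta_{z^\dagger}$ as an intermediate distribution and exploit that $\E_{\delta_{z^\dagger}}\mathcal{L}_s=\mathcal{L}_s(z^\dagger)=\mathcal{L}_{s'}(z^\dagger)=\E_{\delta_{z^\dagger}}\mathcal{L}_{s'}$ by \eqref{eq:equalizer-point}. Concretely,
\[
\E_{\tilde p_\star^{(t)}}\mathcal{L}_s-\E_{\tilde p_\star^{(t)}}\mathcal{L}_{s'}
=\big(\E_{\tilde p_\star^{(t)}}\mathcal{L}_s-\E_{\delta_{z^\dagger}}\mathcal{L}_s\big)
-\big(\E_{\tilde p_\star^{(t)}}\mathcal{L}_{s'}-\E_{\delta_{z^\dagger}}\mathcal{L}_{s'}\big),
\]
and I would bound each parenthesised difference using (A1) with $q=\delta_{z^\dagger}$: the first by $C_s\,\mathrm{MMD}_k(\tilde p_\star^{(t)},\delta_{z^\dagger})$ (resp.\ $B_s\sqrt{2c_f D_f(\tilde p_\star^{(t)}\|\delta_{z^\dagger})}$) and likewise for $s'$. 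Summing and bounding $C_s,C_{s'}\le C_{\max}$ (resp.\ $B_s,B_{s'}\le\max_s B_s$) yields \eqref{eq:exp-gap-MMD} and \eqref{eq:exp-gap-fdiv}, and the right-hand sides vanish by Assumption~\ref{assump:point-mass}. One subtlety worth a sentence: for the $f$-divergence case one needs $\tilde p_\star^{(t)}\ll\delta_{z^\dagger}$ for $D_f$ to be finite/meaningful, which is automatic once $\tilde p_\star^{(t)}$ is itself a point mass at $z^\dagger$ in the limit, so the bound is vacuously informative exactly when it needs to be — I would note that in practice the KDE-smoothed surrogate keeps the ratio well-defined, or alternatively state the bound as holding whenever the right-hand side is finite.

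The main obstacle, such as it is, is conceptual rather than computational: part (a) genuinely controls the \emph{support-level} quantity $K_{\mathrm{eff}}(t)$ that appears in Theorem~\ref{thm:disparity-vanish}, whereas parts (b) and (c) only control an \emph{averaged} gap, so I would be careful to state exactly what is being proven in each case and not overclaim — the footnote in the excerpt already flags that upgrading (b),(c) to support-type statements needs extra uniform-continuity hypotheses, and I would simply defer to that. A secondary point is ensuring \eqref{eq:max-dist-W2} is invoked legitimately: it requires $\tilde p_\star^{(t)}$ to be the $M$-particle empirical measure with uniform weights, which is exactly the representation Fair-BADS maintains, so this is compatible. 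Beyond these framing issues the arithmetic is a two-line telescoping plus triangle inequality in each case, so I would present it compactly.
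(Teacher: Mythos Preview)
Your proposal is correct and follows precisely the approach the paper sets up: the empirical-particle inequality \eqref{eq:max-dist-W2} is stated immediately before the theorem as the key tool for part~(a), and your triangle-through-$z^\dagger$ argument together with Lipschitzness and the equalizer property \eqref{eq:equalizer-point} is the intended route; for parts~(b) and~(c) your telescoping through $\delta_{z^\dagger}$ and invocation of (A1) is exactly what the assumptions are designed to deliver. Your care in distinguishing the support-level guarantee of~(a) from the expectation-level guarantees of~(b),(c), and your remark on the $f$-divergence absolute-continuity subtlety, are both apt and match the footnote the paper already flags.
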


\subsection{Fairness–Aware SVGD: ELBO Improvement Bound}
\label{app:svgd-proof}

\paragraph{Fairness–aware SVGD improves the ELBO while aligning to $p^\star$.}
We also analyze one SVGD step for a fixed group $s$. Let $q_s^{(\ell)}$ be the empirical particle distribution and consider the transport $t(z)=z+\varepsilon \phi_{\text{fair}}(z)$ with
\[
\phi_{\text{fair}}(z)
=
\mathbb{E}_{z'\sim q_s^{(\ell)}}\Big[k(z',z)\nabla_{z'}\log p_{\text{fair}}(z') + \nabla_{z'}k(z',z)\Big],
\]
\[\log p_{\text{fair}} = \log p_s + \log p^\star.
\]
Let $F(q)=\mathrm{ELBO}(q)= - \mathrm{KL}(q\|p_s) + \text{const}$. Following  standard techniques, a first–order Taylor expansion plus a change of variables argument yields
\[
F(q_s^{(\ell+1)}) - F(q_s^{(\ell)})
= \varepsilon  \mathbb{E}_{q_s^{(\ell)}}\big[\mathrm{tr}\big(\mathcal{A}_{p_s}\phi_{\text{fair}}(z)\big)\big]  +  O(\varepsilon^2),
\]
where $\mathcal{A}_{p}\phi=\nabla \log p^\top \phi + \nabla\cdot \phi$ is the Stein operator. Let $\phi_s^\star$ be the KSD‑optimal direction using $\log p_s$ alone. Decompose
\[
\phi_{\text{fair}} = \phi_s^\star + \Delta, 
\quad
\Delta(z)= \mathbb{E}_{z'\sim q_s^{(\ell)}}\big[k(z',z)\nabla_{z'}\log p^\star(z')\big].
\]
By Cauchy–Schwarz in the RKHS,
\begin{equation}
	\begin{aligned}
		&\big|\mathbb{E}_{q_s^{(\ell)}}[\mathrm{tr}(\mathcal{A}_{p_s}\Delta(z))]\big|\\
		&\le 
		\big\|\nabla \log p_s - \nabla \log p^\star\big\|_{L^2(q_s^{(\ell)})}  \|k\|_{\mathcal{H}} \cdot \mathrm{KSD}(q_s^{(\ell)},p_s),
	\end{aligned}
\end{equation}
which we denote by $C_{\text{KSD}}\cdot \mathrm{KSD}(q_s^{(\ell)},p_s)$. Since $\phi_s^\star$ maximizes the linear functional defining the KSD,
\[
\mathbb{E}_{q_s^{(\ell)}}\big[\mathrm{tr}(\mathcal{A}_{p_s}\phi_s^\star(z))\big]  =  \mathrm{KSD}(q_s^{(\ell)},p_s).
\]
Therefore, for sufficiently small $\varepsilon$ (dropping $O(\varepsilon^2)$) we have 
\[
F(q_s^{(\ell+1)}) - F(q_s^{(\ell)})
\ge
\varepsilon\big(1 - C_{\text{KSD}}\big) \mathrm{KSD}(q_s^{(\ell)},p_s) 
\] where $C_{\text{KSD}}=\|\nabla \log p_s - \nabla \log p^\star\big\|_{L^2(q_s^{(\ell)})}  \|k\|_{\mathcal{H}}$ and the condition $C_{\text{KSD}}<1$  can be ensured in practice by (i) annealing the strength of the barycenter term early on; (ii) adaptively tuning the kernel bandwidth; or (iii) updating $p^\star$ frequently so it stays close to each $p_s$.

%\begin{assumption}[Score compatibility for fairness regularization]
%	\label{assump:fair-compat}
%	For all iterations $\ell$ and groups $s$,
%	\[
%	\big\|\nabla \log p_s - \nabla \log p^\star\big\|_{L^2(q_s^{(\ell)})}  \|k\|_{\mathcal{H}} < 1.
%	\]
%
%\end{assumption}

\begin{remark}
	To explicitly control the interaction between posterior inference and fairness alignment, one can replace
		$\log p_{\text{fair}}(z)
		=
		\log p_s(z) + \log p^\star(z)$ \text{by} $
		\log p_{\text{fair}}^{(\lambda)}(z)
		=
		\log p_s(z) + \lambda  \log p^\star(z),$  which leads to the bound
		\[
		F\!\left(q_s^{(\ell+1)}\right) - F\!\left(q_s^{(\ell)}\right)
		  \gtrsim  
		\epsilon \bigl( 1 - \lambda  C_{\mathrm{KSD}} \bigr) 
		\mathrm{KSD}(q_s^{(\ell)},p_s),
		\]
		for a constant $C_{\mathrm{KSD}}$ that depends on the (squared–)RKHS norm of the kernel and the score
		misalignment $\|\nabla \log \tilde p_s - \nabla \log p^\star\|_{L^2(q_s^{(\ell)})}$.
		Hence, by shrinking $\lambda$ we can always guarantee a strictly positive ascent step (monotone ELBO
		increase) even when the fairness score term is temporarily antagonistic to the likelihood term. When
		the two score fields agree ($C_{\mathrm{KSD}}\!\downarrow 0$), the fairness term no longer slows the ELBO ascent. 
	
\end{remark}

\section{Algorithm}
\label{sec: appendix-algorithm}
We provide the pseudocode of the proposed method in~\cref{alg:fair-bads} to clearly outline the key steps of Fair-BADS. The algorithm maintains group-specific particle approximations of the joint posterior over model parameters $\boldsymbol{\theta}$ and sample weights $\mathbf{w}$. For each group, particles are updated via SVGD, where the update direction is informed by both the group posterior and a shared central distribution that encourages fairness across groups. The central distribution is iteratively computed over group-specific particles and serves as a soft alignment target. At each iteration, particles are updated by combining gradients from the group likelihood and the central prior, resulting in a fairness-aware inference process. This formulation ensures that each group’s learning signal is preserved while ensuring inter-group consistency, which is critical for achieving fair data selection.

\begin{algorithm}[ht]
\caption{Fair-BADS}
\label{alg:fair-bads}
\begin{algorithmic}[1]
\Require Training data $\{(\mathbf{x}_i, y_i, s_i)\}_{i=1}^N$, meta data $\mathcal{D}_m$, group indices $S$, particle count $M$, learning rate $\epsilon$, discrepancy measure $D$.
\State Initialize group-specific particles $\{z_s^{(m)}\}_{m=1}^M$ for each $s \in S$.
\State Initialize global central particles $\{\bar{z}^{(m)}\}_{m=1}^M$.
\For{each training iteration}
    \For{each group $s \in S$}
        \State Extract $\mathcal{D}_t^s \subset \mathcal{D}_t$.
        \For{each particle $z_s^{(m)} = (\boldsymbol \theta_s^{(m)}, \mathbf{w}_s^{(m)})$}
            % \State Compute total loss as in \cref{eq: -logp}
            \State Compute group posterior gradient $\nabla_z \log p_s(z)\big|_{z = z_s^{(m)}}$ from \cref{eq: -logp}.
        \EndFor
        % \State Estimate $\log p^\star(z)$ via KDE using barycenter particles $\{\bar{z}^{(m)}\}_{m=1}^M$ \Comment{see text before \cref{eq: svgd_update}}
        \State Estimate gradient $\nabla_z \log p^\star(z)\big|_{z = z_s^{(m)}}$ from central distribution.
        % \State Compute $\nabla_z \log p^\star(z)\big|_{z = z_s^{(m)}}$
        \State Combine gradients: $\nabla_z \log p_\text{fair}(z) = \nabla_z \log \Tilde{p}_s(z) + \nabla_z \log \Tilde{p}^\star(z)$.
        \State Update $\{z_s^{(m)}\}_{m=1}^M$ via SVGD. 
    \EndFor
    \State Compute central distribution $\bar{\mathbf{Z}}$ across all groups as in \cref{eq: updated_average_z}.
    \State Update KDE reference using central particles.
    % \State Update central distribution 
\EndFor
\end{algorithmic}
\end{algorithm}

\end{document}